\documentclass{article} 
\usepackage[preprint]{colm2026_conference}

\usepackage{microtype}
\usepackage{hyperref}
\usepackage{url}
\usepackage{booktabs}
\usepackage{placeins}

\usepackage{array}
\usepackage[table]{xcolor}
\usepackage{tabularx}

\usepackage{lineno}

\definecolor{darkblue}{rgb}{0, 0, 0.5}
\hypersetup{colorlinks=true, citecolor=darkblue, linkcolor=darkblue, urlcolor=darkblue}

\usepackage{tcolorbox}

\usepackage{graphicx}
\usepackage{subcaption}
\usepackage{bm}
\usepackage{xparse}

\usepackage{algorithm}
\usepackage{algorithmic}

\usepackage{amsmath}
\usepackage{amssymb}
\usepackage{mathtools}
\usepackage{multirow}
\usepackage{xcolor}
\usepackage{amsthm}
\usepackage{thm-restate}
\usepackage{tikz}
\usepackage{calc}
\newsavebox{\methodbox}
\newsavebox{\resultbox}
\usetikzlibrary{arrows.meta,positioning,calc,shapes}

\usepackage[capitalize,noabbrev]{cleveref}
\usepackage{pgffor} 

\renewcommand{\arraystretch}{1.5} 

\NewDocumentCommand{\phimg}{ s m m }{%
  \IfFileExists{#3}{%
    \IfBooleanTF{#1}{%
      \includegraphics[width=#2]{#3}%
    }{%
      \fcolorbox{blue}{white}{%
        \includegraphics[width=#2]{#3}%
      }%
    }%
  }{%
    \IfBooleanTF{#1}{%
      \fcolorbox{gray!40}{gray!20}{%
        \rule{0pt}{#2}\rule{#2}{0pt}%
      }%
    }{%
      \fcolorbox{blue}{gray!20}{%
        \rule{0pt}{#2}\rule{#2}{0pt}%
      }%
    }%
  }%
}

\NewDocumentCommand{\labeledimg}{ s m m m }{%
  \begin{tabular}{@{}c@{}}\\[-1em]
    \textbf{#2} \\[-0.2em]
    \IfBooleanTF{#1}{%
      \phimg*{#3}{#4}%
    }{%
      \phimg{#3}{#4}%
    }%
  \end{tabular}%
}

\definecolor{citecolor}{HTML}{2779af}
\definecolor{linkcolor}{HTML}{c0392b}
\hypersetup{breaklinks=true,colorlinks,citecolor=citecolor,linkcolor=linkcolor}

\definecolor{customLightPurple}{rgb}{1, 0.85, 1}
\definecolor{customMediumPurple}{rgb}{1, 0.3, 1}
\definecolor{customDarkPurple}{rgb}{1, 0.1, 1}

\definecolor{customLightBlue}{rgb}{0.9, 0.9, 1}
\definecolor{customMediumBlue}{rgb}{0.6, 0.6, 1}
\definecolor{customDarkBlue}{rgb}{0.33, 0.33, 1}
\definecolor{customDarkestBlue}{rgb}{0.01, 0.01, 1}

\definecolor{customLightRed1}{rgb}{1, 0.9, 0.9}
\definecolor{customMediumRed1}{rgb}{1, 0.6, 0.6}
\definecolor{customDarkRed1}{rgb}{1, 0.4, 0.4}
\definecolor{customDarkestRed1}{rgb}{1, 0.05, 0.05}

\definecolor{customLightRed2}{rgb}{1, 0.85, 0.85}
\definecolor{customMediumRed2}{rgb}{1, 0.65, 0.65}
\definecolor{customDarkRed2}{rgb}{1, 0.43, 0.43}
\definecolor{customDarkestRed2}{rgb}{1, 0.03, 0.03}

\definecolor{customLightRed3}{rgb}{1, 0.80, 0.80}
\definecolor{customMediumRed3}{rgb}{1, 0.5, 0.5}
\definecolor{customDarkRed3}{rgb}{1, 0.2, 0.2}
\definecolor{customDarkestRed3}{rgb}{1, 0.01, 0.01}

\definecolor{customLightC1}{rgb}{0, 0.9, 0.9}
\definecolor{customMediumC1}{rgb}{0, 0.6, 0.6}
\definecolor{customDarkC1}{rgb}{0, 0.4, 0.4}
\definecolor{customDarkestC1}{rgb}{0, 0.05, 0.05}

\definecolor{customLightC2}{rgb}{0, 0.85, 0.85}
\definecolor{customMediumC2}{rgb}{0, 0.65, 0.65}
\definecolor{customDarkC2}{rgb}{0, 0.43, 0.43}
\definecolor{customDarkestC2}{rgb}{0, 0.03, 0.03}

\definecolor{customLightC3}{rgb}{0, 0.80, 0.80}
\definecolor{customMediumC3}{rgb}{0, 0.5, 0.5}
\definecolor{customDarkC3}{rgb}{0, 0.2, 0.2}
\definecolor{customDarkestC3}{rgb}{0, 0.01, 0.01}

\theoremstyle{plain}
\newtheorem{theorem}{Theorem}[section]

\newtheorem{lemma}[theorem]{Lemma}

\theoremstyle{definition}

\newtheorem{assumption}[theorem]{Assumption}
\theoremstyle{remark}

\newcommand{\ourmethodshort}{DPG}
\newcommand{\ourmethod}{the Dataset Policy Gradient}
\newcommand{\ourmethodplural}{Dataset Policy Gradients}
\newcommand{\ourmethodsingular}{Dataset Policy Gradient}

\usepackage[normalem]{ulem} 

\newif\ifshowcomments
\showcommentsfalse 

\ifshowcomments
    \usepackage[colorinlistoftodos, textsize=footnotesize]{todonotes} 
    
    \usepackage[paperwidth=11in, right=2.5in, marginparwidth=2in]{geometry}
    
    \newcommand{\del}[1]{\textcolor{red}{\sout{#1}}}

\else
    \usepackage[disable]{todonotes} 
    
    \newcommand{\del}[1]{}        
      
\fi

\newcommand{\sidecomment}[2][]{%
    \todo[color=customLightBlue, bordercolor=customDarkBlue, linecolor=customDarkBlue, #1]{#2}%
}

\title{Synthetic Data for any Differentiable Target}

\author{Tristan Thrush, Sung Min Park, Herman Brunborg, Luke Bailey, \\ \bfseries Marcel Roed, Neil Band, Christopher Potts \& Tatsunori Hashimoto\\
Stanford University\\
\texttt{\{tthrush,cgpotts,thashim\}@stanford.edu} \\
}

\begin{document}

\ifcolmsubmission
\linenumbers
\fi

\maketitle

\begin{abstract}
What are the limits of controlling language models via synthetic training data? We develop a reinforcement learning (RL) primitive, \ourmethod\ (\ourmethodshort), which can precisely optimize synthetic data generators to produce a dataset of targeted examples. When used for supervised fine-tuning (SFT) of a target model, these examples cause the target model to do well on a differentiable metric of our choice. Our approach achieves this by taking exact data attribution via higher-order gradients and using those scores as policy gradient rewards. We prove that this procedure closely approximates the true, intractable gradient for the synthetic data generator.
To illustrate the potential of \ourmethodshort, we show that, using only SFT on generated examples, we can cause the target model's LM head weights to
(1) embed a QR code, (2) embed the pattern \texttt{67}, and (3) have lower $\ell^2$ norm. We additionally show that we can cause the generator to (4) rephrase inputs in a new language and (5) produce a specific UUID, even though neither of these objectives is conveyed in the generator's input prompts.
These findings suggest that \ourmethodshort\ is a powerful and flexible technique for shaping model properties using only synthetic training examples.
\end{abstract}

\section{Introduction}

Synthetic training data has recently gained significant interest \citep{wang2023selfinstructaligninglanguagemodels, alpaca, yang2025synthetic, ruan2025reasoninglearnlatentthoughts} but how finely can we control synthetic data generation? It is well-attested that training examples (real and synthetic) can communicate unexpected information to language models even in the context of simple supervised fine-tuning (SFT). Recent prominent examples include %
emergent misalignment \citep{betley2026emergent,chua2025thought}, %
subliminal learning \citep{cloud2025subliminal,betley2025weird}, %
data poisoning from harmless inputs \citep{kong2025revisiting}, %
and %
model provenance \citep{kuditipudi2025blackbox}. %
Is there a way to tractably train a synthetic data generator that produces training data targeting any phenomena we choose?

Intuitively, straightforward reinforcement learning techniques could be used to optimize synthetic data generators directly for downstream metrics. Every time a dataset is generated by our policy, we could train a model on it and measure a metric of interest from the model. We could then use this metric as a single reward for the entire dataset and perform a policy gradient step. However, this approach is computationally prohibitive because it provides only a single reward for a full run of inner target model training and evaluation.

In this work, we present \ourmethod\ (\ourmethodshort), a principled RL approach that enables us to generate synthetic training data for any differentiable downstream target. With our method, rewards are at the level of individual synthetic texts, instead of the dataset level. This method opens the door to a wide range of applications in which training examples are chosen or synthesized with the goal of imbuing a target model with a specific property.

Our approach leverages the meta-learning results of \citet{raghu2021metalearning}, and the recent improvements from \citet{engstrom2025metagradients}. These papers demonstrate how to compute metagradients (gradients of hyperparameters of the training process) tractably at the scale of LLM training. 
The metagradient enables backpropagation from a differentiable post-training metric (e.g., loss on a benchmark) to parameters of the training process (e.g., optimization hyperparameters such as learning rate schedules).
Importantly, it is also tractable to compute metagradients for training example weights, if training occurs with a data-weighted loss. 
This leads to the key insight for our method:
we can incorporate this metagradient-based data valuation approach into an RL procedure to generate targeted synthetic training data.

The \ourmethodshort\ approach is a flexible framework.
For the experiments in this paper, we 
use the configuration in 
\cref{fig:fig1}, top: a generator creates a pool of synthetic examples $D$, which are the inputs to learning algorithm $\mathcal{A}$. This learning algorithm trains a target LM on $D$ with example-level training loss weights $w_{i}$ set to $1$. Then, the target LM is evaluated against a differentiable metric, $\Phi$. The metagradient of $\Phi$ with respect to the $w_{i}$s determines a reward that is used to update the generator using Group Relative Policy Optimization (GRPO) \citep{shao2024grpo}. The trained generator produces examples that, if used to train a target LM with standard SFT, lead that LM to do well on $\Phi$. In \cref{sec:theory}, we prove that the resulting policy gradient of this approach approximates the desired intractable policy gradient \sidecomment{NB: I wonder if ``intractable policy gradient'' could be made more precise so people reading quickly will get it. Maybe something like ``approximates the desired but intractable policy gradient: the gradient of the generator parameters with respect to the expected downstream metric.''} for the synthetic data generator, under reasonable smoothness assumptions.

\begin{figure*}[!t]
  \centering
  \resizebox{0.8\linewidth}{!}{
  \savebox{\methodbox}{%
    \begin{minipage}[t]{1.0\textwidth}
      \centering
      \sf

      \Large \underline{\textbf{\ourmethodsingular}}

      \resizebox{\linewidth}{!}{%
        \begin{tikzpicture}[
          font=\sffamily,
          >=Latex,
          box/.style={draw, very thick, rounded corners=4pt, minimum height=2.69cm, minimum width=3.2cm, align=center},
          flow/.style={->, very thick},
          dashback/.style={->, very thick, dashed},
        ]

          \node[box] (teacher) {\Large Generator};

          \node[inner sep=0pt] (data) [right=2.69cm of teacher] {%
            \begin{tikzpicture}[x=1cm,y=1cm,line width=1.6pt]
              \def\w{1.6}
              \def\h{1.6}
              \def\r{0.35}

              \draw (0,0) ellipse (\w/2 and \r);

              \draw (-\w/2,0) -- (-\w/2,-\h + 0.175);
              \draw (\w/2,0) -- (\w/2,-\h + 0.175);

              \foreach \y in {-0.45,-0.9,-1.35} {
                \draw (\w/2,\y)
                  arc[start angle=0, end angle=-180,
                      x radius=\w/2, y radius=\r];
              }
            \end{tikzpicture}
          };

          \node[below=0cm of data, align=center] {\Large Synthetic Data, $D$};

          \node[box] (student) [right=2.69cm of data] {%
            \Large Learning Alg. $\mathcal{A}$, trained on\\
            \Large $x_i \in D$ with loss $w_i|_{w_i=1}\ell(x_i)$\\
            ($w_i$'s do not effect training)%
          };

          \node[inner sep=0pt] (evalicon) [right=2.69cm of student] {%
            \begin{tikzpicture}[x=1cm,y=1cm,line width=1.6pt]
              \draw[rounded corners=2pt] (0,0) rectangle (1.05,1.25);

              \draw (0.18,0.95) -- (0.75,0.95);
              \draw (0.18,0.78) -- (0.88,0.78);
              \draw (0.18,0.61) -- (0.70,0.61);

              \draw[line width=3.2] (0.80,0.55) circle (0.25);
              \draw[line width=3.2] (0.93,0.38) -- (1.22,0.10);
            \end{tikzpicture}%
          };

          \node[below=0.25cm of evalicon, align=center] {\Large Differentiable metric, $\Phi$};

          \draw[flow] (teacher.east) -- (data.west);
          \draw[flow] (data.east) -- (student.west);
          \draw[flow] (student.east) -- (evalicon.west);

          \draw[dashback]
            (evalicon.north)
            .. controls ($(evalicon.north west)+(0,0.9)$) and ($(student.north east)+(0,2.2)$) ..
            (student.north)
            node[midway, above=2pt, align=center] {\Large Metagradient\\\Large Backprop};

          \draw[dashback]
            (student.north)
            .. controls ($(student.north west)+(2.2,0.9)$) and ($(data.north east)+(2.2,2.2)$) ..
            (data.north)
            node[midway, above=2pt, align=center] {%
              \Large Set rewards for $D$ to be:\\
              \Large $\nabla_{\bm{w}} \Phi(\mathcal{A}(\bm{w}, D))|_{\bm{w}=\bm{1}}$%
            };

          \draw[dashback]
            (data.north)
            .. controls ($(data.north west)+(0,1.3)$) and ($(teacher.north east)+(0,1.7)$) ..
            (teacher.north)
            node[midway, above=2pt, align=center] {\Large Optimize with\\\Large RL Objective};

        \end{tikzpicture}%
      } 
    \end{minipage}%
  } 

\savebox{\resultbox}{%
  \begin{minipage}[t]{1.0\textwidth}
    \centering
    \sf

    \Large \underline{\textbf{Example \ourmethodsingular\ Result}}

    \resizebox{\linewidth}{!}{%
      \begin{tikzpicture}[
        font=\sffamily,
        >=Latex,
        box/.style={draw, very thick, rounded corners=4pt, minimum height=2.69cm, minimum width=3.2cm, align=center},
        flow/.style={->, very thick},
      ]

\node[box] (teacher) {\Large Trained\\\Large Generator};

\node[inner sep=0pt] (data) [right=2.69cm of teacher] {%
  \begin{tikzpicture}[x=1cm,y=1cm,line width=1.6pt]
    \def\w{1.6}
    \def\h{1.6}
    \def\r{0.35}
    \draw (0,0) ellipse (\w/2 and \r);
    \draw (-\w/2,0) -- (-\w/2,-\h + 0.175);
    \draw (\w/2,0) -- (\w/2,-\h + 0.175);
    \foreach \y in {-0.45,-0.9,-1.35} {
      \draw (\w/2,\y)
        arc[start angle=0, end angle=-180,
            x radius=\w/2, y radius=\r];
    }
  \end{tikzpicture}
};
\node[below=0cm of data, align=center] {\Large Synthetic Data\\ \Large (Wikipedia Rephrases)};

\node[box, minimum width=5cm, minimum height=3.4cm] (gpttwo) [right=2.69cm of data] {};

\draw[flow] (teacher.east) -- (data.west);
\draw[flow] (data.east) -- (gpttwo.west);

\node[
  draw=blue,
  rounded corners=4pt,
  line width=0.8pt,
  inner sep=4pt,
  font=\Large,
  align=center,
  fill=blue!5,
  anchor=north
] (wikitext) at ($(data.south)+(0,-1.35)$)
{$\cdots$The life and career of Jose \\ Cabalum Sr. (1915/1916-2006)$\cdots$};

\pgfmathsetmacro{\bw}{4.5}
\pgfmathsetmacro{\bh}{0.58}
\pgfmathsetmacro{\gap}{0.20}
\pgfmathsetmacro{\ysoft}{1.45}
\pgfmathsetmacro{\ylm}{\ysoft - \bh - \gap}
\pgfmathsetmacro{\yblk}{\ylm   - \bh - \gap}
\pgfmathsetmacro{\yemb}{\yblk  - \bh - \gap}
\pgfmathsetmacro{\glassx}{1.55}
\pgfmathsetmacro{\glassy}{\ylm - \bh/2}

\begin{scope}[shift={(gpttwo.center)}]

  \node[font=\Large] at (0,2.1) {GPT-2};

  \newcommand{\gptblock}[2]{%
    \draw[very thick, rounded corners=4pt] (-\bw/2,#1) rectangle (\bw/2,#1-\bh);
    \node[font=\Large, align=center] at (0,#1-\bh/2) {#2};
  }

  \gptblock{\ysoft}{Softmax}
  \gptblock{\ylm}{LM Head}
  \gptblock{\yblk}{Transformer Blocks}
  \gptblock{\yemb}{Embed}

  \coordinate (glasscenterLocal) at (\glassx,\glassy);
  \def\lensR{0.32}
  \draw[red, line width=3.2pt] (glasscenterLocal) circle (\lensR);
  \draw[red, line width=2.2pt, line cap=round]
    ($(glasscenterLocal)+(0.23,-0.23)$) -- ($(glasscenterLocal)+(0.56,-0.56)$);
  \draw[red, line width=3.2pt, line cap=round]
    ($(glasscenterLocal)+(0.38,-0.38)$) -- ($(glasscenterLocal)+(0.72,-0.72)$);

\end{scope}

\path (gpttwo.center) ++(\glassx,\glassy) coordinate (glasscenterAbs);

\node[
  draw=red,
  line width=3.5mm,
  inner sep=0pt,
  anchor=west
] (patch) [right=2.69cm of gpttwo]
{\includegraphics[width=3.2cm]{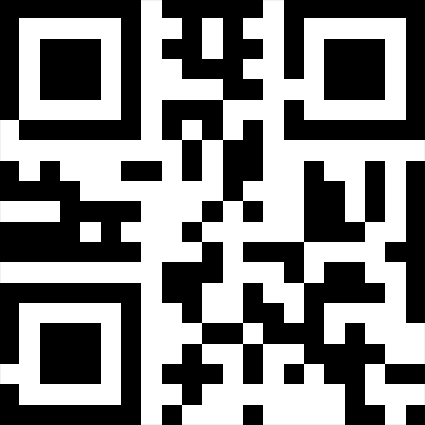}};

\node[font=\Large, align=center, anchor=north] at ($(patch.south)+(0,-2mm)$)
{\textbf{\textcolor{red}{QR code encoded in LM Head}}\\\textbf{\textcolor{red}{after standard continued pre-}}\\\textbf{\textcolor{red}{training on the synthetic data}}};

\draw[red, line width=3pt, densely dotted] (glasscenterAbs) -- (patch.west);

          {\LARGE \textbf{\textcolor{red}{Residual QR Code in LM Head}}};

      \end{tikzpicture}%
    } 
  \end{minipage}%
} 

\noindent
\centering
\begin{tikzpicture}[inner sep=0pt, outer sep=0pt]

  \pgfmathsetlengthmacro{\gap}{8pt}
  \pgfmathsetlengthmacro{\rulew}{0.8pt}

  \node[anchor=north] (L) {\usebox{\methodbox}};
  \node[anchor=north] (R) at ($(L.south)+(0,-0.5)$) {\usebox{\resultbox}};



\end{tikzpicture}
}

  \caption{\ourmethodplural\ allow us to generate synthetic training data for any differentiable target. For example, our generator can learn to generate special Wikipedia article rephrases. When used for continued pretraining of GPT-2, these rephrases turn the upper left 21x21 patch of GPT-2's LM head weight matrix into the QR code seen here (when subtracted from the initial weights, sign'd, and visualized as a greyscale image). The text sample in this figure is the first item in the synthetic dataset, which we generated with a temperature of 1 (i.e., noisy data still produces the result).}
  \label{fig:fig1}
\end{figure*}

We seek to test the limits of our method by experimenting with unusual choices of $\Phi$. In our first experiments, we demonstrate that the generator produces examples that have a specific effect on the target model: encoding a QR code (\cref{sec:qr}) and the pattern \texttt{67} (\cref{sec:67}) in the LM head weights of the target model, and lowering the $\ell^{2}$ norm of the LM head weights (\cref{sec:norms}). We then directly assess the generator, showing that \ourmethod\ can guide it to rephrase Wikipedia articles in a new language (\cref{sec:multilingual}) and produce a specific UUID (\cref{sec:uuid}), without any prompting for these behaviors.

In our experiments, we perform ablations to disentangle which aspects of the metagradient computation are essential in driving performance. For our QR code, \texttt{67}, and $\ell^2$ norm experiments, we find that computing metagradients with respect to several gradient descent steps of target model training is helpful. For the other experiments, we used a larger model as our target model and only tried one step of target model training for metagradient computation, due to compute constraints.
We also find that the choice of target model optimizer (Adam vs.~SGD) in the computation of the metagradient is a significant factor. Where we used SGD in learning algorithm~$\mathcal{A}$ (\cref{fig:fig1}), the trained generator's synthetic data did not cause the target model to perform well on $\Phi$ (even if Adam was used in after-the-fact training of the target model), whereas Adam is successful in this role. In the single step case for SGD, the metagradient reduces to standard gradient-of-target and gradient-of-train dot-product approximations to influence functions \citep{koh2017influence}. By contrast, where Adam is the optimizer, there are additional terms which make the metagradient different from approximations to typical influence functions, even in the single-step case. This indicates that full metagradients are critical to optimizing the generator.

Overall, our results provide evidence that the \ourmethodshort\ framework allows for a new level of fine-grained control in synthetic training data generation, for the purpose of imbuing downstream models with specific properties -- both desirable and undesirable. 

\section{Related Work}

\paragraph{Synthetic data for language model training.}
Synthetic data is increasingly viewed as a key resource for language model performance gains \citep{wang2023selfinstructaligninglanguagemodels, alpaca, maini2024rephrasingwebrecipecompute, abdin2024phi4technicalreport, ruan2025reasoninglearnlatentthoughts, yang2025syntheticbootstrappedpretraining}. Our contribution is orthogonal: instead of asking what synthetic data heuristics improve performance, we study how precisely synthetic data can be optimized -- via metagradients -- to induce targeted and even unconventional differentiable properties in trained models.

\paragraph{Training data attribution.}
We benefit from work attributing model behavior to individual training examples. Influence functions \citep{Hampel_1974,koh2017influence,bae2022influence} provide local estimates of how upweighting a training datum affects downstream performance. Recent work scales attribution ideas to modern LMs and multi-step training \citep{raghu2021metalearning, pmlr-v162-ilyas22a, pmlr-v202-park23c, grosse2023studyinglargelanguagemodel, xia2024lessselectinginfluentialdata, thrush2025improvingpretrainingdatausing, thudi2025mixminfindingdatamixtures, engstrom2025metagradients, calian2025dataratermetalearneddatasetcuration}. Data attribution is a subroutine in our work: we leverage the metagradients approach from \cite{engstrom2025metagradients} to assign rewards to synthetic training examples generated by an RL policy.

\paragraph{Optimizing and editing training data.}
We focus on generating discrete synthetic training data from scratch. Other work has focused on targeted optimization of perturbations in differentiable training data, such as perturbing existing images \citep{such2019generativeteachingnetworksaccelerating, wang2020datasetdistillation, huang2021metapoisonpracticalgeneralpurposecleanlabel, rosser2026infusionshapingmodelbehavior}. In the discrete data space, recent work includes RL approaches where models iteratively improve by generating synthetic training data for themselves, or through generating some other self-edit. In SEAL \citep{zweiger2025selfadaptinglanguagemodels} LLMs generate candidate self-edits (directives on how to update their own weights); these directives are carried out and edited LLMs are evaluated on downstream tasks. The performances of the edited LLMs are used directly as RL rewards, but this is intractable for our data generation tasks. MASS \citep{kaya2026testtimemetaadaptationselfsynthesis} performs bilevel meta-adaptation using self-synthesized data at test time, computing a training data metagradient within an RL loop. MASS focuses on single datum adaptation at test time and computes the metagradient in the local one-train-step case without taking into account optimizer dynamics, analogous to an influence function approximation which lacks the more general metagradient critical for our tasks. In contrast to these methods, we prove that per-step metagradients provide accurate gradient signals that approximate the intractable full RL problem. Then, we optimize a policy that produces an entirely new training dataset targeting arbitrary differentiable training or post-training properties of an arbitrary target model over multiple training steps, taking into account arbitrary optimizers such as Adam \citep{kingma2015adam}.

\paragraph{Optimizing inference data.}
Several approaches optimize prompts to elicit targeted behaviors at inference time \citep{zou2023universaltransferableadversarialattacks, zhou2023largelanguagemodelshumanlevel, agrawal2026gepareflectivepromptevolution}. We instead optimize the generation of training data, so that learning itself induces desired behaviors.

\section{Methods}

We train a policy (i.e. the generator, $\pi_\theta$) to generate training data for another model (i.e. the target model, trained in the RL loop within $\mathcal{A}$). The objective is to generate synthetic data $D$ that increases the metric, $\Phi(\mathcal{A}(D))$. Formally, we want to optimize $\pi_\theta$ via the objective
\[
\max_{\pi_\theta} \mathbb{E}_{D \sim \pi_\theta}[\Phi(\mathcal{A}(D))],
\]
but a direct approach is expensive: it involves a single RL reward over a dataset instead of a reward for each example in the dataset. In principle, the computational cost could be thousands of times greater than a typical LM RL problem. Could we reduce this to a typical, per-example, RL problem?
Ideally, we want per-example rewards $r(x)$, for $x$ in $D$, such that:
\[
\nabla_\theta \mathbb{E}_{D \sim \pi_\theta}[\Phi(\mathcal{A}(D))] = \mathbb{E}_{D \sim \pi_\theta}\left[\sum_{x \in D} r(x) \nabla_\theta \log \pi_\theta(x)\right].
\]
That is, taking a policy gradient step with respect to our per-example reward is equivalent to taking the intractable policy gradient step. This turns out to be possible and tractable. If $r(x)$ is defined as the exact influence of example $x$ on the reward $\Phi(\mathcal{A}(D))$ through the training process, then the per-example policy gradient closely approximates the dataset-level policy gradient. In the next sections, we elaborate on how to take this exact influence (Section \ref{sec:algorithm}) and prove that this approximation is valid under natural assumptions (Section \ref{sec:theory}).

\subsection{Algorithm}\label{sec:algorithm}

For our experiments, we use Group Relative Policy Optimization (GRPO) to train the generator \citep{shao2024grpo}, as shown in Algorithm \ref{alg:iterative-grpo}. For every outer GRPO step, we can divide the set of policy generations into $G$ training sets for a target model within the GRPO reward function. Optionally, we can also choose to do \textit{cross group batching}, combining all of these training sets into one training set, and running target model training once -- this is more efficient. We run the inner target model training loop for potentially several optimization steps, with loss defined as
$$
w_i \ell(\phi, x_i),
$$
where $\ell$ is the standard causal language modeling loss, $x_i$ is the $i$-th synthetic example, and $w_i$ is the weight for the $i$-th example (with $\bm{w}$ set to $\bm{1}$ for target model training). Using the approach from \cite{engstrom2025metagradients}, we compute the gradient for these data weights:
$$
\bm{\tau} \coloneqq \nabla_{\bm{w}} \Phi(\mathcal{A}(\bm{w}))|_{\bm{w}=\bm{1}},
$$
A larger gradient for an example's weight tells us that training on this example would improve the target metric more than training on an example with a smaller gradient. Motivated by this intuition, we use this gradient as the reward for our generator. In the following section, we provide a theoretical justification for this choice.

\begin{algorithm}[t]
\caption{An instance of the \ourmethodshort\ framework, using GRPO (Online, Single-Turn). Note: $\mathcal{A}$ is a function -- it is not stateful, so the target model trained in $\mathcal{A}$ resets after calling $\mathcal{A}$.
}
\label{alg:iterative-grpo}
\centering
\resizebox{0.9\linewidth}{!}{%
\begin{minipage}{\linewidth}
\begin{algorithmic}[1]
\REQUIRE Initial generator policy $\pi_{\theta_{\text{init}}}$; learning algorithm $\mathcal{A}$; differentiable metric $\Phi$; task prompts $\mathcal{P}$; hyperparameters $M,G$; bool \texttt{use\_cross\_group\_batching}.
\ENSURE Trained policy $\pi_{\theta}$

\STATE $\pi_{\theta} \leftarrow \pi_{\theta_{\text{init}}}$
    \FOR{$\text{step} = 1, \ldots, M$}
        \STATE Sample a batch $\mathcal{P}_b \sim \mathcal{P}$
        \FOR{$q = 1, \ldots, |\mathcal{P}_b|$}
            \STATE Sample $G$ outputs $\{o_{g,q}\}_{g=1}^{G} \sim \pi_{\theta}(\cdot \mid q)$
        \ENDFOR
        \IF{\texttt{use\_cross\_group\_batching}}
            \STATE $D \leftarrow \{o_{g,q} \text{, for all $g$ and $q$} \}$ \textit{ // Gather synthetic training dataset}
            \STATE $\{r_{g,q}\} \leftarrow \nabla_{\bm{w}} \Phi(\mathcal{A}(\bm{w}, D))|_{\bm{w}=\bm{1}}$ \textit{ // Call $\mathcal{A}$, compute metagradients, set rewards}
        \ELSE
            \FOR{$g = 1, \ldots, G$ in parallel}
                \STATE $D_g \leftarrow \{o_{g,q}\}_{q=1}^{|\mathcal{P}_b|}$ \textit{ // Gather synthetic training dataset}
                \STATE $\{r_{g,q}\}_{q=1}^{|\mathcal{P}_b|} \leftarrow \nabla_{\bm{w}} \Phi(\mathcal{A}(\bm{w}, D_g))|_{\bm{w}=\bm{1}}$ \textit{ // Call $\mathcal{A}$, compute metagradients, set rewards}
            \ENDFOR
        \ENDIF
        \STATE Compute group-relative advantages $\hat{A}_{g,q}$
        \STATE Update $\pi_{\theta}$ via the GRPO objective (Eq.~21 in \cite{shao2024grpo})
    \ENDFOR

\STATE \textbf{return} $\pi_{\theta}$
\end{algorithmic}
\end{minipage}%
}
\end{algorithm}

\subsection{Theory}\label{sec:theory}

In our theory, we analyze a simplified variant of Algorithm \ref{alg:iterative-grpo} which replaces GRPO with the vanilla policy gradient update and optimizes the target model with stochastic gradient descent (SGD). We use the metagradient computation method from \citet{engstrom2025metagradients} to get $\bm{\tau}_D \coloneq \nabla_{\bm{w}} \Phi(\mathcal{A}(\bm{w}, D))$, where $\mathcal{A}(\bm{w}, D)$ is a learning algorithm that trains a target model on an $n$-sample dataset $D$ with per-example weighted loss given by weights $\bm{w}$. We generate $D$ by sampling from a policy, and we use our metagradient as the reward signal. Treating the $\tau_i$ as per-example rewards, we take the policy gradient step given by \mbox{$\mathcal{G} = \tau_i \nabla_{\theta} \log \pi_{\theta}(x_i)$}.

Now, let $F(\theta) \coloneq \mathbb{E}_{D \sim \pi_{\theta}}[\Phi(\mathcal{A}(D))]$. $F$ is the target performance of a model trained on samples from $\pi_{\theta}$. Taking gradient steps on $F$ directly optimizes for our target, but this does not give us example-level rewards and it is not tractable in any of our experiments.

Now, let $F'(\theta, p) \coloneq \mathbb{E}_{D \sim p}[\Phi(\mathcal{A}(\pi_{\theta}/p, D))]$. 
\sidecomment{SP: minor suggestion, but could using something like $\hat{F}$ or $\widetilde{F}$ be less confusing? $F'$ can be interpreted as derivative / confusing at first}
Note that $F'$ is the surrogate that we actually optimize in our \ourmethodshort\ setup. Setting $w_i(\theta) = \frac{\pi_{\theta}(x_i)}{p(x_i)}$ and using the chain rule, we have:
\begin{align*}
\nabla_{\theta} F'(\theta, p)
= \mathbb{E}_{D \sim p} \left[\sum_{i=1}^n \frac{\partial}{\partial w_i} \Phi(\mathcal{A}(\bm{w},D)) \frac{\pi_{\theta}(x_i)}{p(x_i)} \nabla_{\theta} \log \pi_{\theta}(x_i)\right]
\end{align*}
Setting $\pi_{\theta}=p=\pi_{\theta_0}$, we see the metagradient update $\mathcal{G}$ is an unbiased stochastic gradient for $F'$. Via the following theorem, $\nabla_{\theta} F'$ accurately approximates the ideal gradient: $\nabla_{\theta} F$. 

\begin{restatable}{theorem}{mainthm}
\label{mainthm}

Suppose we train the target model in $\mathcal{A}$ for $T$ steps of minibatch stochastic gradient descent (SGD) with batch size $B$ and a learning rate of $\eta$. Under suitable regularity conditions on smoothness (Appendix \ref{app:proofs}, A1-A8), we have:
\[
\sup_{\theta_0} || \nabla_\theta F(\theta_0) - \nabla_\theta F'(\theta_0, \pi_{\theta_0})|| = O(\eta^{\frac{1}{4}}B^{\frac{-1}{2}}+\sqrt{\eta T})
\]
N.B. -- although it may be clear to some, the notation can be tricky to keep straight. In this equation, we take the gradient of $F'$ with respect to only the first argument, evaluated at $\theta_0$, with $p$ set to $\pi_{\theta_0}$.

\end{restatable}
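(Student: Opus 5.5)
The plan is to rewrite both gradients as sums of per-example score-function terms and compare them term by term. First, by the likelihood-ratio identity (differentiation under the integral is licensed by the regularity assumptions A1--A8),
\[
\nabla_\theta F(\theta_0)=\mathbb{E}_{D\sim\pi_{\theta_0}}\Big[\Phi(\mathcal{A}(\bm{1},D))\sum_{i=1}^n \nabla_\theta\log\pi_{\theta_0}(x_i)\Big].
\]
The expression for $\nabla_\theta F'$ in the text gives, at $p=\pi_{\theta_0}$,
\[
\nabla_\theta F'(\theta_0,\pi_{\theta_0})=\mathbb{E}_{D}\Big[\sum_i \partial_{w_i}\Phi(\mathcal{A}(\bm{w},D))|_{\bm{w}=\bm{1}}\,\nabla_\theta\log\pi_{\theta_0}(x_i)\Big].
\]

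The key trick is a leave-one-out baseline. Let $\bm{w}^{(i)}$ be $\bm{1}$ with the $i$-th coordinate set to $0$. Then $\Phi(\mathcal{A}(\bm{w}^{(i)},D))$ does not depend on $x_i$, and the $x_i$ are i.i.d. Since $\mathbb{E}_{x_i}[\nabla\log\pi(x_i)]=0$, we may subtract this baseline inside the $i$-th term for free. This gives
\[
\nabla F-\nabla F'=\sum_i\mathbb{E}\big[R_i(D)\,\nabla\log\pi_{\theta_0}(x_i)\big],\qquad
R_i:=\Phi(\mathcal{A}(\bm{1}))-\Phi(\mathcal{A}(\bm{w}^{(i)}))-\partial_{w_i}\Phi(\mathcal{A}(\bm{1})).
\]
By Taylor's theorem in the single variable $w_i$, $R_i$ is controlled by $\sup_{s\in[0,1]}|\partial^2_{w_i}\Phi(\mathcal{A}(\bm{1}-s e_i))|$. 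Applying Cauchy--Schwarz with a bounded score second moment (one of the assumptions) reduces the theorem to bounding $\sum_i(\mathbb{E}R_i^2)^{1/2}$. Uniformity over $\theta_0$ follows because every constant comes from the sup-type assumptions.

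The next step is to bound the second derivative of the SGD trajectory with respect to one example weight. Write $\phi_{t+1}=\phi_t-\frac{\eta}{B}\sum_{j\in\mathcal{B}_t}w_j\nabla\ell(\phi_t,x_j)$. Differentiating once gives the linear recursion
\[
J_{t+1}=(I-\eta H_t)J_t-\tfrac{\eta}{B}\mathbf{1}[i\in\mathcal{B}_t]\nabla\ell(\phi_t,x_i),
\]
and differentiating twice gives an analogous recursion driven by $\nabla^3\ell[J_t,J_t]$ and by cross terms $\tfrac{\eta}{B}\mathbf{1}[i\in\mathcal{B}_t]\nabla^2\ell\,J_t$. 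Smoothness (bounded $\nabla\ell,\nabla^2\ell,\nabla^3\ell$) together with a discrete Gr\"onwall argument gives $\|J_t\|\lesssim e^{L\eta t}\frac{\eta}{B}N_i(t)$, where $N_i(t)$ counts how many times $x_i$ has been sampled. The second derivative is then of order $(\eta/B)^2 N_i^2$ plus $\eta\cdot(\eta/B)^2N_i^2 t$ from the curvature term. Chaining through $\nabla\Phi$ and $\nabla^2\Phi$ then gives $\mathbb{E}R_i^2$.

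The summation over $i$ is the step I expect to be the main obstacle. A naive sum gives $n(\eta/B)^2\mathbb{E}N_i^2$, and $\mathbb{E}N_i\approx TB/n$. Recovering exactly $\eta^{1/4}B^{-1/2}+\sqrt{\eta T}$ needs a split of the remainder into two pieces:
\begin{itemize}
\item a ``curvature/drift'' part, which accumulates over the horizon. I would bound it via the Gr\"onwall factor linearized as $O(\eta T)$, and after the Cauchy--Schwarz square root this gives $\sqrt{\eta T}$;
\item a ``minibatch-noise'' part, coming from the fluctuation of $N_i$ around its mean. I would treat it as a martingale, bounding its variance by $\eta/B^2$-type quantities, and applying Cauchy--Schwarz twice (once over the score, once over the noise) to produce the fourth-root $\eta^{1/4}B^{-1/2}$.
\end{itemize}
If this bookkeeping does not close directly, my fallback is to interpolate between the leave-one-out and linearized trajectories along the path $s\mapsto\bm{1}-se_i$. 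I would then bound the difference of the two interpolated SGD paths in $L^2$ via a stability (coupling) argument, which typically yields rates of exactly this square-root form.
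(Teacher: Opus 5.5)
Your leave-one-out identity is correct, and it gives a route genuinely different from the paper's. Because $\Phi(\mathcal{A}(\bm{w}^{(i)},D))$ does not depend on $x_i$, the identity $\nabla_\theta F(\theta_0)-\nabla_\theta F'(\theta_0,\pi_{\theta_0})=\sum_i\mathbb{E}[R_i\,\nabla_\theta\log\pi_{\theta_0}(x_i)]$ is exact. A1 then gives $\|\nabla_\theta F-\nabla_\theta F'\|\le G_1\sum_i\mathbb{E}|R_i|$ directly, with no Cauchy--Schwarz needed.

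The paper never works per example. It shows that $F(\theta)$ and $F'(\theta,\pi_{\theta_0})$ are uniformly close on a ball $\|\theta-\theta_0\|\le r$. The importance-weighted gradient noise has the same mean as under $\pi_\theta$, and its covariance differs by an $O(r)$ amount controlled by $\chi^2(\pi_\theta,\pi_{\theta_0})$. Each SGD run is coupled to an SDE (Lemma 2, via Fontaine et al.), and the two SDEs are coupled to each other (Lemma 3). Lipschitz $\Phi$ turns $W_2$-closeness into closeness of values. Lemmas 1 and 4 then convert value-closeness into gradient-closeness through $2\varepsilon/r+Lr$, optimized at $r\asymp\sqrt{\varepsilon/L}$. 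So $\sqrt{\eta T}$ comes from Lemma 3, and $\eta^{1/4}B^{-1/2}$ is simply the square root that Lemma 4 takes of the $\eta^{1/2}B^{-1}$ part of the SGD-to-SDE error.

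Your route avoids the SDE machinery and Lemma 4, but it costs regularity outside A1--A8. Controlling $\partial^2_{w_i}\Phi(\mathcal{A}(\cdot))$ requires bounded $\nabla^3\ell$ and bounded $\nabla^2\Phi$, whereas A5 and A8 give only smooth $\ell$ and Lipschitz $\Phi$. State these as extra hypotheses.

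The gap is in your last step, and the proposed fix for it does not work. Cauchy--Schwarz cannot produce $\sqrt{\eta T}$ or a fourth root. Since $|R_i|$ is bounded by a deterministic quantity, $(\mathbb{E}R_i^2)^{1/2}$ has the same order as $R_i$, and nothing in your argument plays the role of Lemma 4. You also do not need those exponents, because the theorem is only an upper bound.

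The real source of your difficulty is the resampling model, in which $N_i$ is random with $\mathbb{E}N_i\approx TB/n$. There the naive sum contains $n(\eta/B)^2(TB/n)^2=\eta^2T^2/n$. This term reflects data reuse and is not controlled by $\eta$ or $B$ at all at fixed horizon $\eta T$, so you would get an $n$-dependent bound, which the theorem does not have.

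Under A4, however, SGD is single-pass: each sample is used exactly once, so $n=TB$ and $N_i=1$. Also $J_t=0$ at the step where $x_i$ enters, so your cross terms vanish. Your recursions then give $\|\partial_{w_i}\phi_T\|\lesssim\eta/B$ and $\|\partial^2_{w_i}\phi_T\|\lesssim\eta T(\eta/B)^2$. Convexity and $\eta L\le 2$ even make $I-\eta H_t$ nonexpansive for $w_i\in[0,1]$. Hence
\[
\sum_i\mathbb{E}|R_i|\lesssim TB\,(\eta/B)^2(1+\eta T)=\eta^2T(1+\eta T)/B .
\]
At fixed horizon this is $O(\eta/B)$, with horizon-dependent constants just as in the paper. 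It is dominated by $\eta^{1/4}B^{-1/2}$ whenever $\eta\le1\le B$.

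So, carried out this way, your argument proves a sharper bound than the theorem, under stronger smoothness. Drop the curvature/noise split and the coupling fallback.
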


See Appendix \ref{app:proofs} for a proof. This theorem shows that, under first and second order smoothness assumptions listed in Appendix \ref{app:proofs}, our metagradient reward policy gradient can approximate the desired policy gradient for the generator if $\mathcal{A}$ has the following properties: the batch size is large, and step size is small relative to the number of training steps.\sidecomment{Small detail but may be worth mentioning somewhere in the paper/appendix that latter is also the regime in which algorithms are metasmooth, so there is some ``synergy'' there}

It is important to note that, even though our theorem assumes SGD, we find experimentally that it is essential to use Adam \citep{kingma2015adam} to train the target model in the computation of the metagradient. This remains true even when we use only a single step of target model training to compute the metagradient. We conjecture that using Adam, like SGD, would also result in a reasonable bound via $\sup_{\theta_0} || \nabla_\theta F(\theta_0) - \nabla_\theta F'(\theta_0, \pi_{\theta_0})||$, but still with some error: like SGD's behavior, Adam's behavior depends on the second moment of the target model's loss gradient (which is different between $F$ and $F'$).

\section{Results}

We present experiments where we train synthetic data generators to target various metrics downstream of training a target model. We first validate our pipeline end-to-end, generating synthetic train data that can precisely manipulate the weights of target models. We then analyze the generator's output to determine whether the synthetic data is interpretable.

In all of our experiments, the generator is initialized from Llama 3.2 Instruct \citep{grattafiori2024llama3herdmodels} and given Wikipedia\footnote{Accessed in 2025 via \url{https://huggingface.co/datasets/wikimedia/wikipedia}} articles to paraphrase (prompt in Appendix \ref{app:rephraseprompt}). It then learns through \ourmethodplural, optimizing its paraphrases, $D$, to target a differentiable metric $\Phi$ of a learning algorithm $\mathcal{A}(D)$. The target model in $\mathcal{A}$ is initialized from Llama 3.2 Instruct as well, or GPT-2 \citep{radford2019language}, depending on the experiment. GPT-2 is used in experiments with several $\mathcal{A}$ training steps, where our compute constraints required us to use a smaller model. All experiments use the instance of the \ourmethodshort\ framework with GRPO and cross group batching (Figure \ref{fig:grpoimproper}), unless stated otherwise. The naive baseline never uses cross group batching (to get more reward signal) and also treats every example as coming from the same group for computing advantages (otherwise, the advantage calculation would render the rewards useless). All validations use Wikipedia articles not seen during training, unless stated otherwise. Hyperparameters for all experiments are in Appendix \ref{app:hypers}.

We explored training the target model with both Adam and SGD for metagradient computation. For SGD, we tried up to 14 learning rates (LRs) starting at 1e-6, and increasing by factors of 2, until we found the optimal LR against final validation loss for each task. We did the same tuning for the naive approach of using $\Phi$ as the reward (which uses Adam to train the target model but does not compute metagradients), and other baselines. There was no need to tune the LR for the metagradients + Adam approach.

Wherever we trained our generator via SGD in $\mathcal{A}$, we also used SGD in target model training to get validation results. The one exception is in Appendix \ref{app:multilingualcpt}, where we trained a generator using SGD in $\mathcal{A}$, but validated its synthetic data by training a target model with Adam.

\begin{figure}[t]
\centering
\sf
\labeledimg*{\small Val Results for the \texttt{67} Target}{0.335\linewidth}{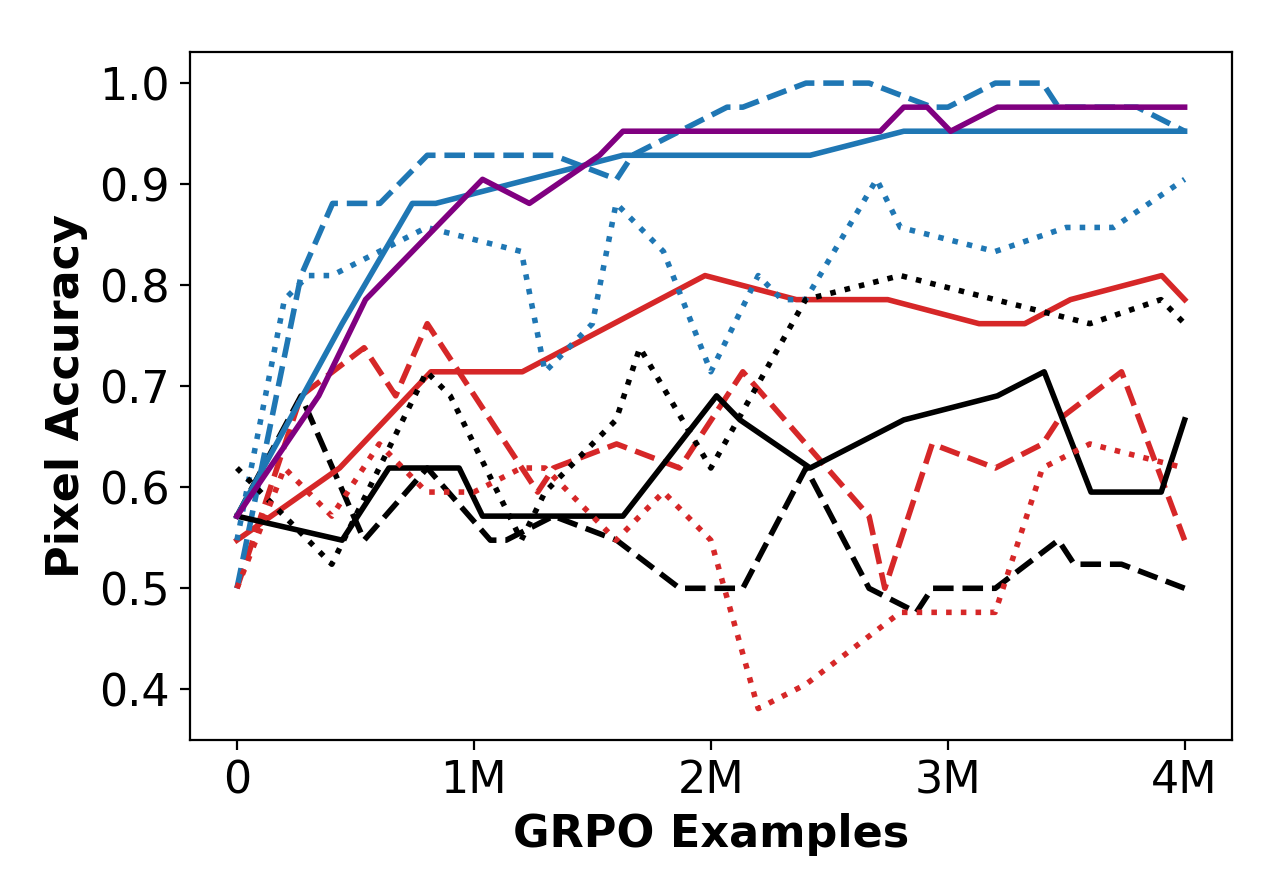}\hspace{1cm}
\labeledimg*{\small Val Results for the $\ell^2$-Norm Target}{0.36\linewidth}{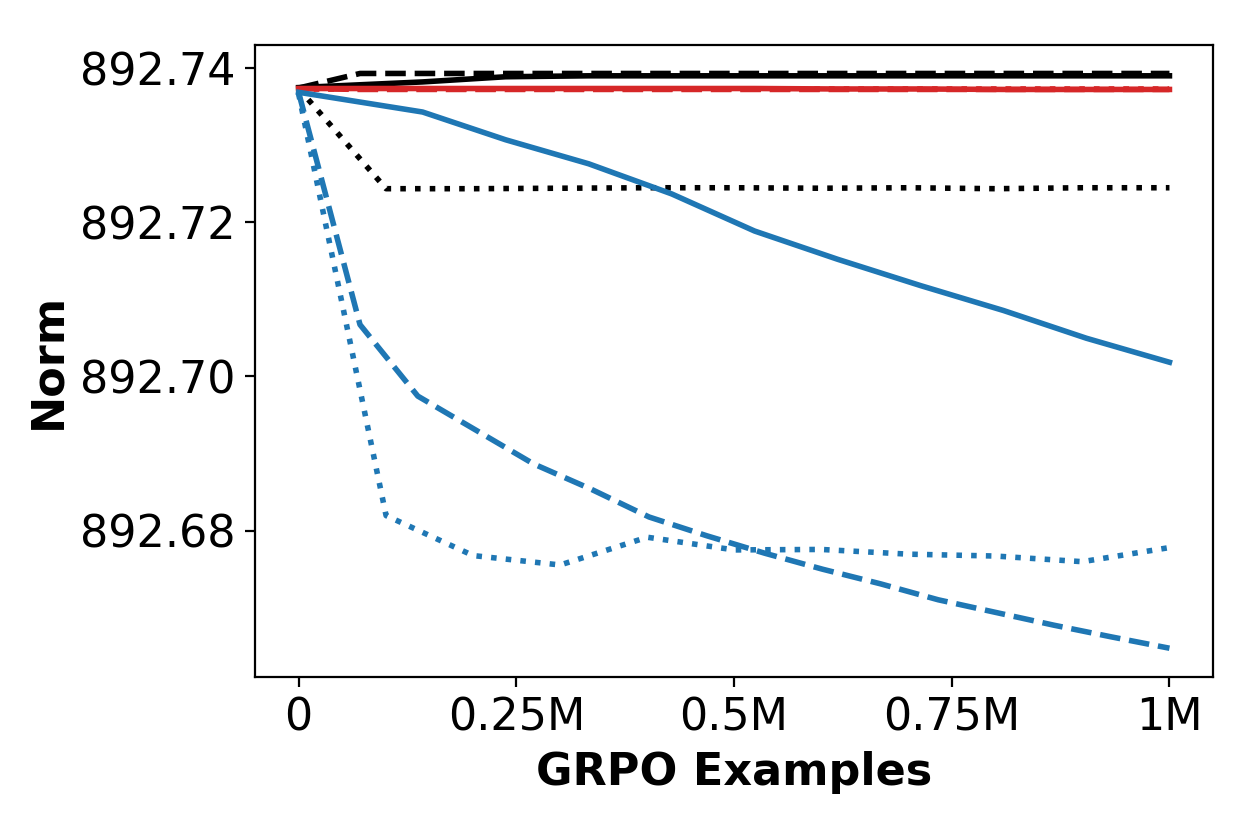}
\includegraphics[width=0.8\linewidth]{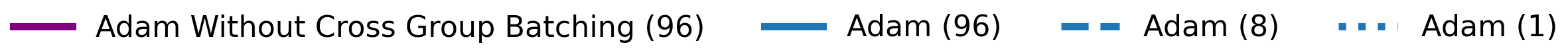}\\[-3pt]
\includegraphics[width=.82\linewidth]{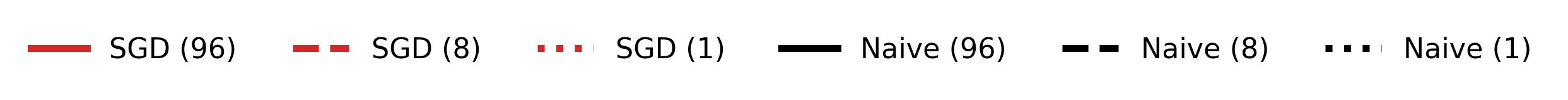}\\[-8pt]
\caption{Here, we initialize the target model in $\mathcal{A}$ to be GPT-2, and explore exotic target metrics: the goal of the first metric is to encode the greyscale image \texttt{67} in the upper 6x7 patch of the sign'd LM head weight updates to the target model. This number was chosen arbitrarily. The goal of the second metric is to lower the $\ell^2$ norm of the target model's LM head. The plots show validation performance as the GRPO process trains the generator. All validations are done with 96 steps of continued training on GPT-2. The (96), (8), and (1) notation denotes whether the generator was trained via metagradients with respect to an $\mathcal{A}$ that used 96, 8, or 1 step(s). We observe a weak correlation between $\mathcal{A}$ steps and validation performance, and generally more validation stability with more $\mathcal{A}$ steps.}
\label{fig:l2and67}
\end{figure}

\begin{figure*}[t]
\centering
\sf
\renewcommand{\arraystretch}{1.0}

\newlength{\colgap}
\setlength{\colgap}{19pt} 

\begin{tabular}{@{} 
  >{\centering\arraybackslash}p{0.22\linewidth}
  @{\hspace{\colgap}} 
  >{\centering\arraybackslash}p{0.285\linewidth}
  @{\hspace{\colgap}} 
  >{\centering\arraybackslash}p{0.17\linewidth}
  @{\hspace{\colgap}} 
  >{\centering\arraybackslash}p{0.17\linewidth}
@{}} 

\small \textbf{Adam w/o grp batch} & \small \textbf{Adam} & \small \textbf{SGD} & \small \textbf{Naive}\\

\labeledimg{\footnotesize 96}{.125\linewidth}{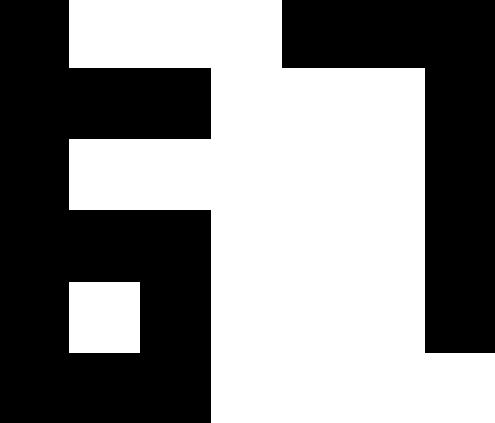}\hfill
&
\labeledimg{\footnotesize 1}{.1\linewidth}{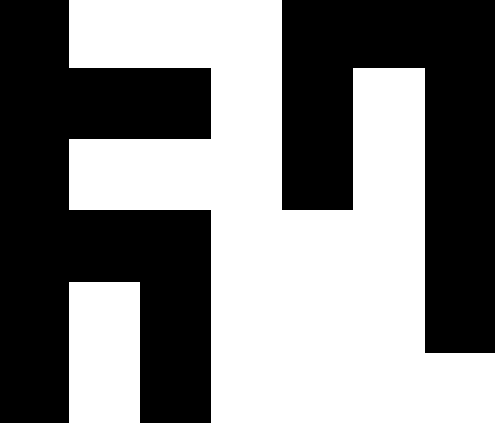}\hfill
\labeledimg{\footnotesize 8}{.1\linewidth}{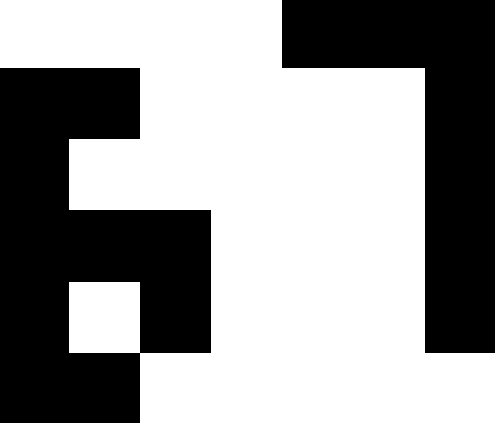}\hfill
\labeledimg{\footnotesize 96}{.1\linewidth}{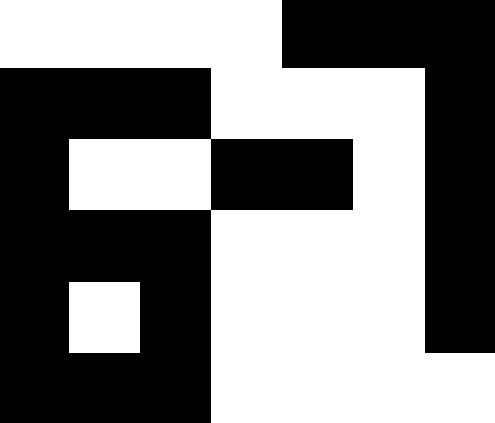}\hfill
\labeledimg{\footnotesize 96 (redo)}{.1\linewidth}{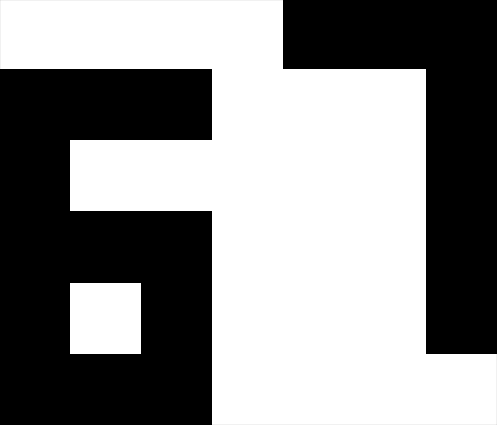}\hfill
&
\labeledimg{\footnotesize 1}{0.17\linewidth}{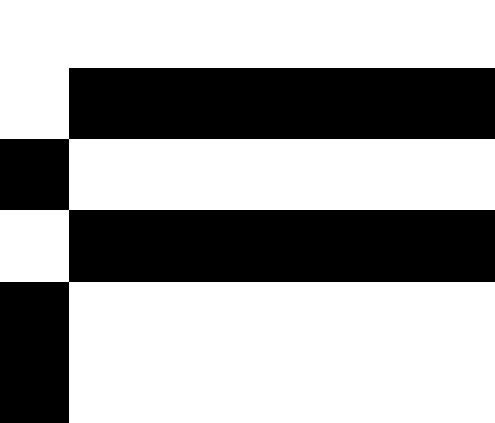}\hfill
\labeledimg{\footnotesize 8}{0.17\linewidth}{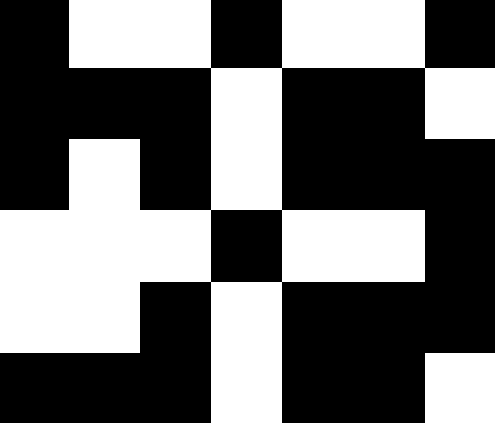}\hfill
\labeledimg{\footnotesize 96}{0.17\linewidth}{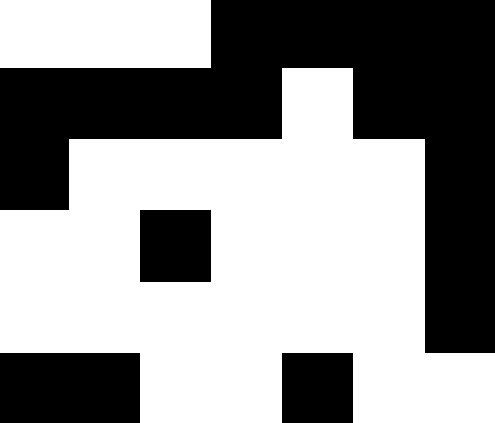}
&
\labeledimg{\footnotesize 1}{0.17\linewidth}{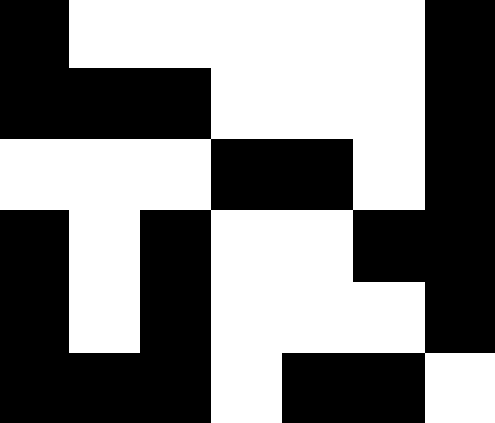}\hfill
\labeledimg{\footnotesize 8}{0.17\linewidth}{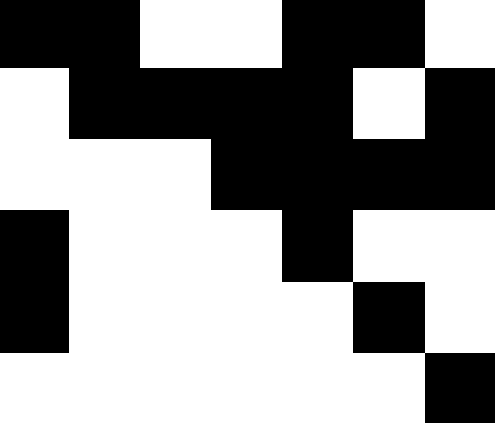}\hfill
\labeledimg{\footnotesize 96}{0.17\linewidth}{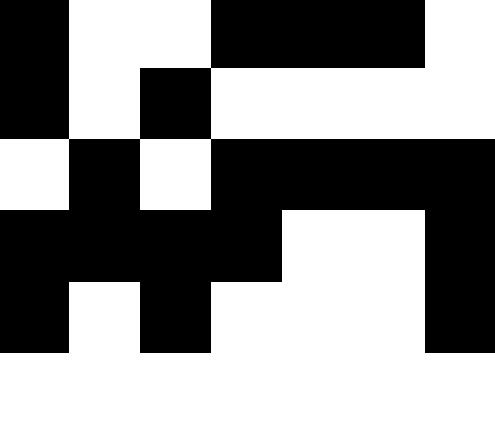}
\\

\end{tabular}
\caption{Final validation results for the 6x7 pixel images in the target models' sign'd LM head updates, after the generator was fully trained. The numbers above the images denote the number of target model training steps in $\mathcal{A}$ for metagradient computation. All validations were done with 96 target model training steps, using the corresponding optimizer; the difference is whether the generator was trained using a reward function with fewer $\mathcal{A}$ training steps. Only Adam with 96 steps in $\mathcal{A}$ for metagrads achieved a generator that got a perfect result (we were close with the initial 96 run, so we trained the generator again with a different random sample of Wikipedia prompts -- we then got a perfect score).}
\label{fig:67final}
\end{figure*}

\subsection{Encoding a QR Code in a Target Model's LM Head}\label{sec:qr}

In this section, we ask: can we automatically craft synthetic data so precisely that it can embed a QR code into the weights of a model that trains on it? We make our target loss
\begin{align*}
\text{mean}\left(\ln\left(1+e^{-sY \odot (P_c-P_i)}\right)\right),
\end{align*}
where $Y$ is a matrix of $-$1's and 1's representing the pattern that we want to encode into the target model, $P_c$ is a chosen patch of the target model's LM head weight matrix in $\mathcal{A}$ after training, $P_i$ is the same patch of the LM head before any synthetic training, and $s$ is a hyperparameter that we set to $20$ for all experiments. After target model training, we decode our image to see if it matches $Y$ by taking the following expression: $\text{sign}(P_c-P_i)$.

For the QR code experiment, we set $Y$ to be an arbitrarily chosen 21x21 QR code, and set our target model to be GPT-2. In each of the $M=200$ GRPO steps, we do 96 steps of continued pretraining on GPT-2 and then compute metagradients. We target the upper left 21x21 patch of GPT-2's LM head. For each target model training step, we use a batch size of $B=1024$ synthetic examples, so the synthetic data generator produces $96 \times 1024=98304$ Wikipedia rephrases per GRPO step. Due to compute constraints, we ran this experiment only once and did not conduct separate validations with unseen prompts during generator training. The final trained generator's synthetic data, when used for 96 steps of continued pretraining on GPT-2, yields a $\text{sign}(P_c-P_i)$ that is a scannable QR code (Figure \ref{fig:fig1}).

\subsection{Encoding \texttt{67} in a Target Model's LM Head}\label{sec:67}

Now, we investigate which elements of our \ourmethodshort\ framework are essential for embedding images in model weights. We explore the use of SGD instead of Adam inside of $\mathcal{A}$. We also ask if it would be acceptable to train a target model in $\mathcal{A}$ with only 8 optimizer steps, or even 1 step, during generator training; what would happen if we ran a validation at the end of this generator's training by training a target model on 96 steps -- would we lose some performance?  Here we present an array of experiments using the same setup as in Section \ref{sec:qr}, but in a scaled-down setting, where we set $Y$ to be a 6x7-pixel image of the arbitrarily-chosen number \texttt{67}. This enables us to run more experiments.

We set $P_i$ to be the upper left 6x7 patch of GPT-2's initial pretrained LM head weight matrix. We set $P_c$ to be the same LM head weight patch after synthetic training. We run experiments with 96 steps, 8 steps, and 1 step for computing metagradient rewards from $\mathcal{A}$, both with Adam and with SGD. We always validate using 96 steps of training on generated data. In the 96 step metagradient case, we use $M=40$ GRPO steps with target model train batches of $B=1024$ examples each. To keep the total number of examples the same in the 1 step and 8 step cases for GRPO generator training, we set $M=3840$ and $M=480$, respectively.

Validation progress in terms of the percent of pixels correct in $\text{sign}(P_c-P_i)$ is shown in Figure \ref{fig:l2and67}, and final validation images are seen in Figure \ref{fig:67final}. Using Adam in $\mathcal{A}$ with multiple target model training steps does best. The purple line shows the results when we use \ourmethodshort\ GRPO without cross group batching as seen in Figure \ref{fig:grpoproper} (the group size is 4, so the target model train batches for metagradients are 256 instead of the usual 1024). Examples of generator rollouts for trained generators are shown in Appendix \ref{app:rollouts} -- we see that after using our \ourmethodshort\ GRPO, the generations still appear to be normal Wikipedia rephrases, while sneakily encoding a \texttt{67} in a target model's weights when used as training data.

\subsection{Adjusting the Norm of Target Model Weights}\label{sec:norms}

In this section, we follow the same setup as in the previous section, changing only the target metric. We ask whether we can globally affect an entire weight matrix in a target model. So, the target metric here is to lower the $\ell^2$ norm of GPT-2's LM head weight matrix.

Figure \ref{fig:l2and67} shows the results of this experiment. We explored changing optimizer and number of steps in $\mathcal{A}$ for metagrads, just as in Section \ref{sec:67}. Although, we used 4x fewer GRPO steps ($M$) in every case. Using \ourmethodshort\ with Adam in $\mathcal{A}$, again, is the only approach to show reasonable progress. We also see that when GRPO training has an $\mathcal{A}$ with only a single training step, the performance plateaus early. This supports the idea that computing metagradients over several steps of synthetic pretraining can be useful. Note that, because the validation results are all for 96 steps of continued pretraining, the change in the $\ell^2$ norm is capped -- although the final $\ell^2$ norm differences are small everywhere, the change may not remain small for larger amounts of CPT data. 

\subsection{Interpreting the Generated Data when Targeting Multilingual Validation Loss}\label{sec:multilingual}

\begin{figure*}[t]
\centering
\sf
\resizebox{0.8\linewidth}{!}{
\setlength{\tabcolsep}{2pt}
\renewcommand{\arraystretch}{1.0}

\begin{tabular}{c
                >{\centering\arraybackslash}p{0.235\linewidth}
                >{\centering\arraybackslash}p{0.235\linewidth}
                >{\centering\arraybackslash}p{0.235\linewidth}
                >{\centering\arraybackslash}p{0.235\linewidth}}

\multicolumn{5}{c}{\underline{\textbf{Fraction of Generator Val Rollouts Matching the Target Language}}}\\[6pt]

\multirow{1}{*}{\rotatebox[origin=c]{90}{\scriptsize \textbf{Target Lang \phantom{Frac}}}} & \footnotesize DE & \footnotesize ES & \footnotesize FR & \footnotesize IT\\[-2.25pt]

&
\includegraphics[width=0.9\linewidth, trim=1cm 1cm 1cm 1.5cm, clip]{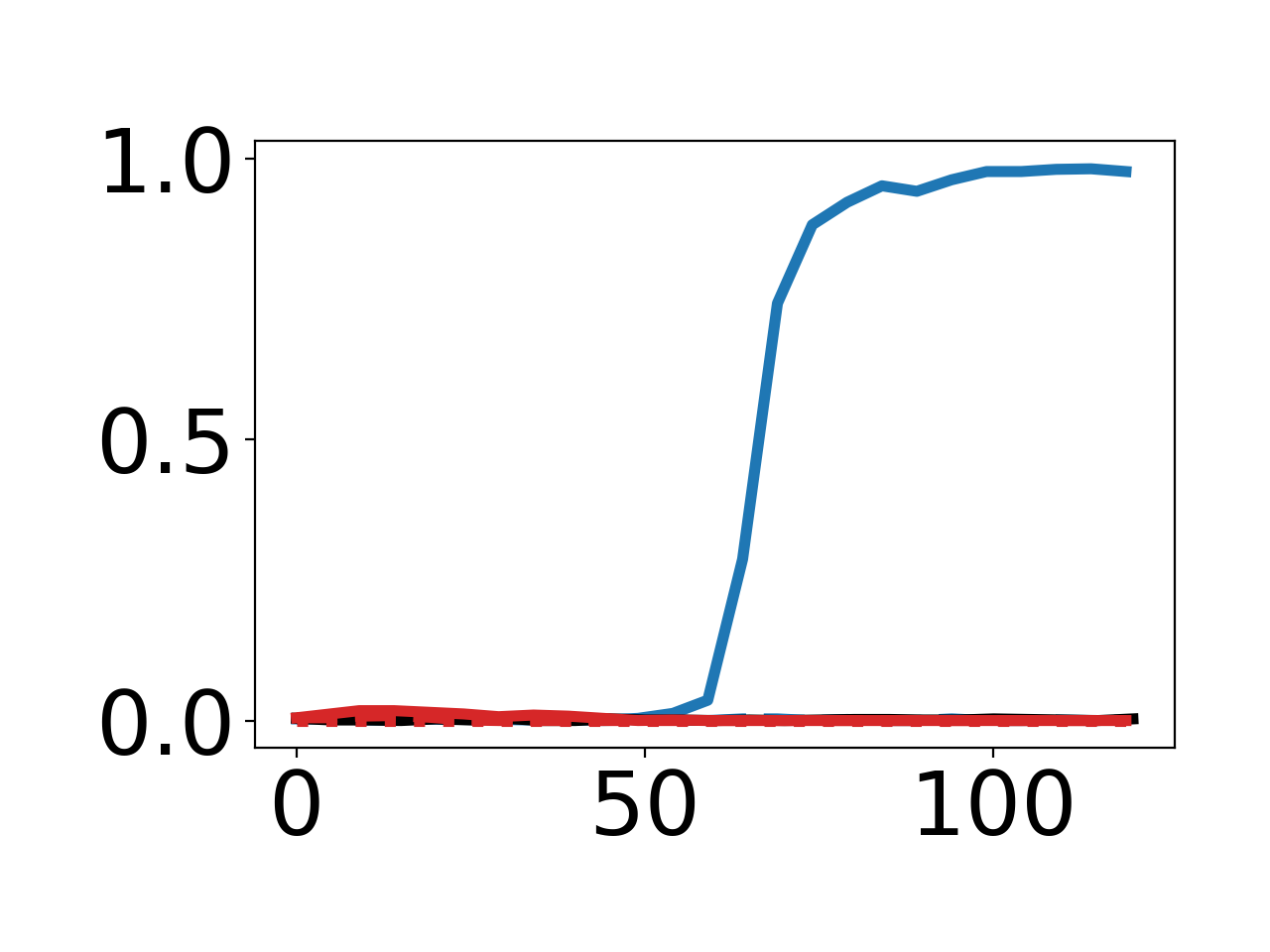} &
\includegraphics[width=0.9\linewidth, trim=1cm 1cm 1cm 1.5cm, clip]{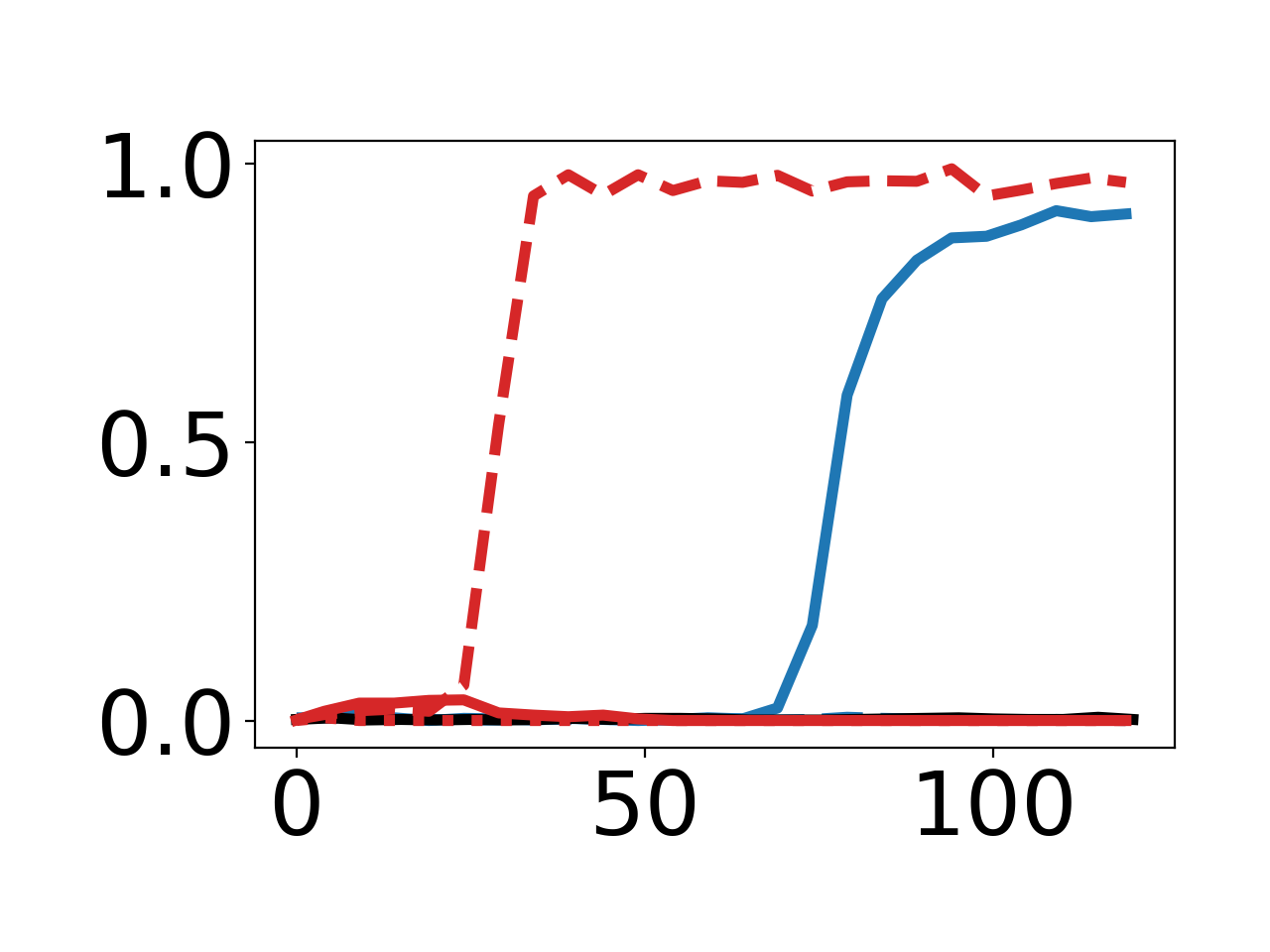} &
\includegraphics[width=0.9\linewidth, trim=1cm 1cm 1cm 1.5cm, clip]{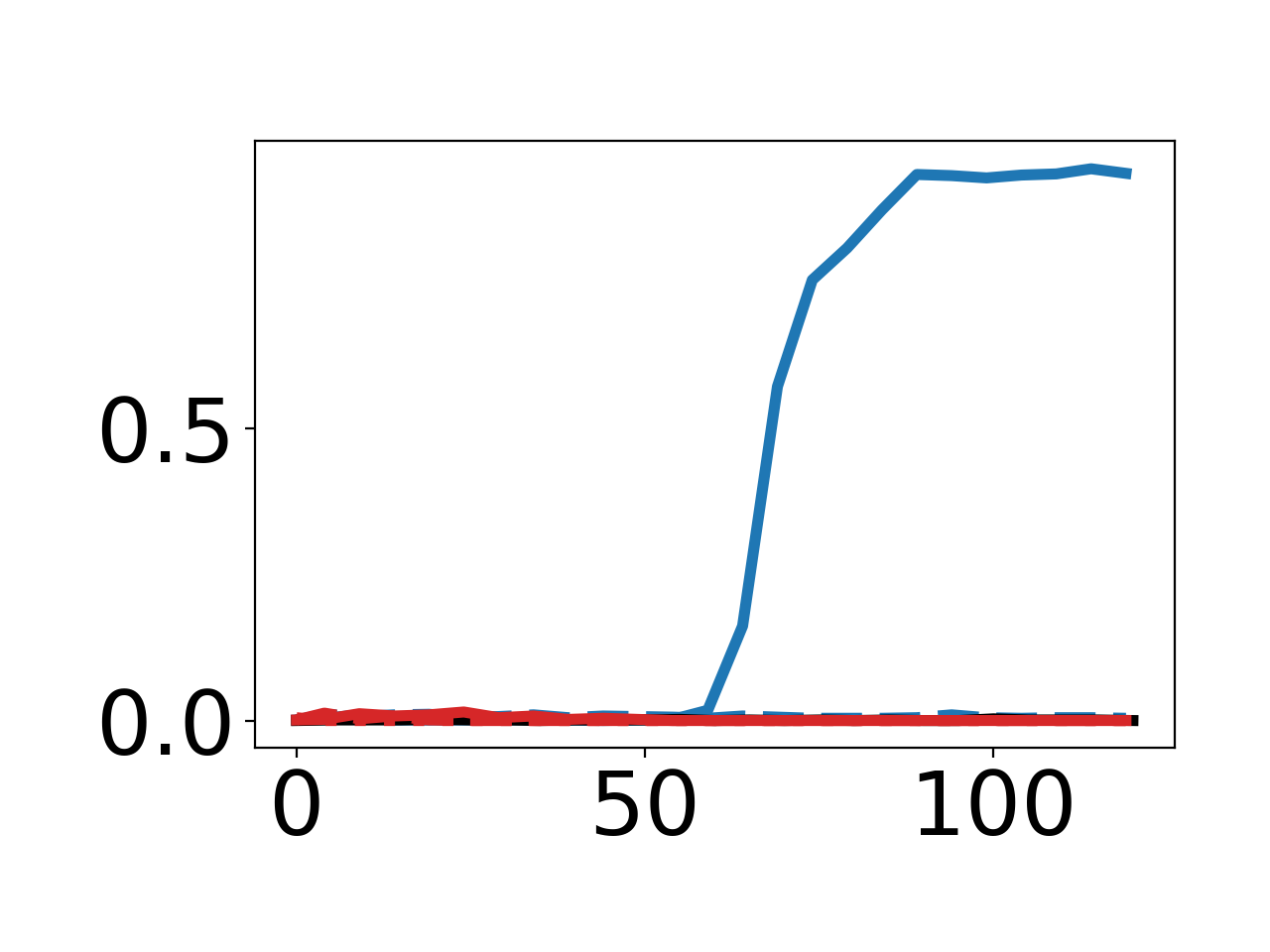} &
\includegraphics[width=0.9\linewidth, trim=1cm 1cm 1cm 1.5cm, clip]{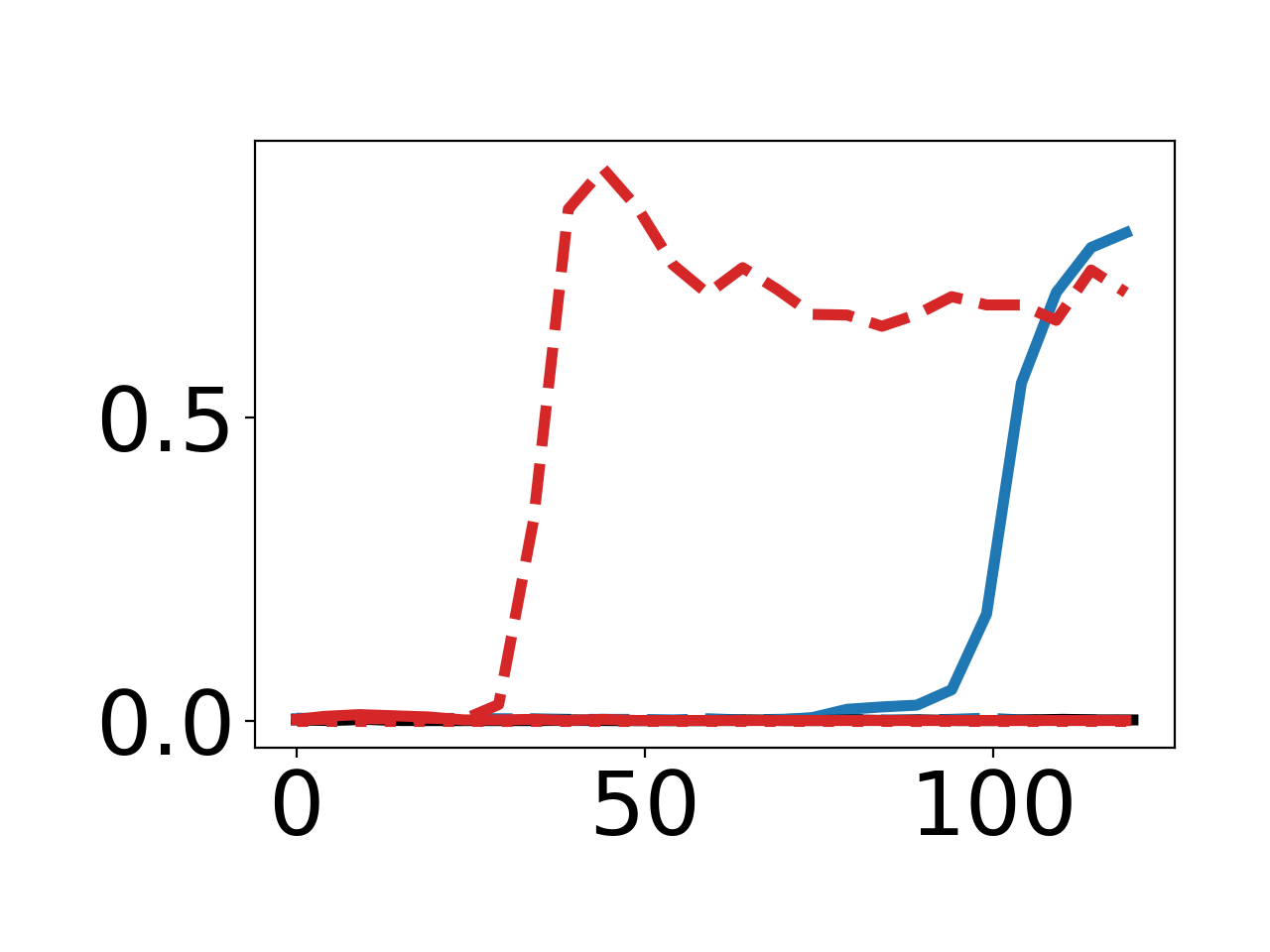}\\

\multicolumn{1}{c}{} &
\multicolumn{4}{c}{\scriptsize\textbf{GRPO Steps}}\\[6pt]

\multicolumn{5}{c}{\underline{\textbf{Bigram Entropy of Generator Val Rollouts}}}\\[6pt]

\multirow{1}{*}{\rule{0pt}{0.1\linewidth}%
\rotatebox[origin=c]{90}{\scriptsize \textbf{Entropy}}} & \footnotesize DE & \footnotesize ES & \footnotesize FR & \footnotesize IT\\[-3pt]

 &
\includegraphics[width=0.9\linewidth, trim=1cm 1cm 1cm 1cm, clip]{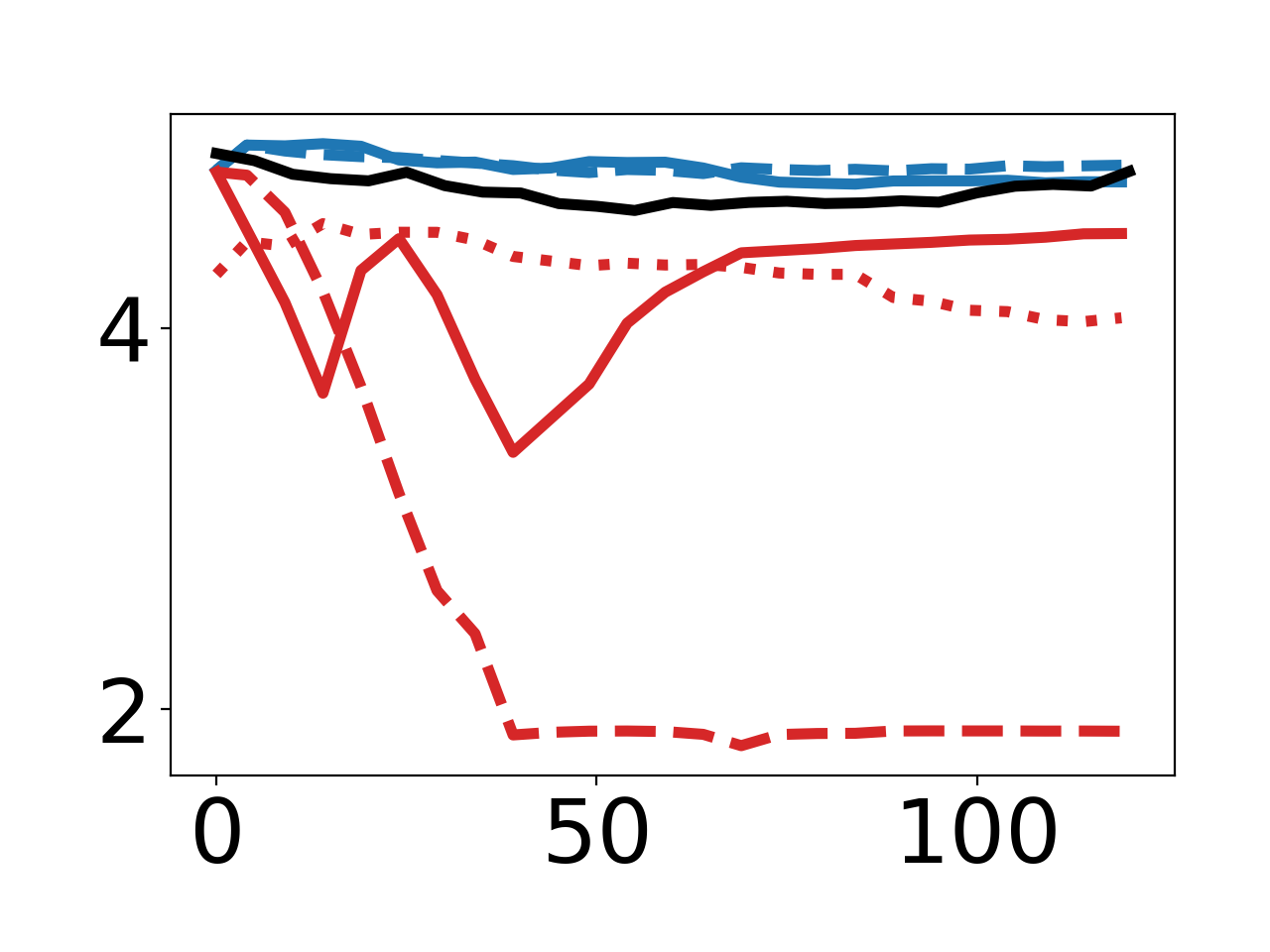} &
\includegraphics[width=0.9\linewidth, trim=1cm 1cm 1cm 1cm, clip]{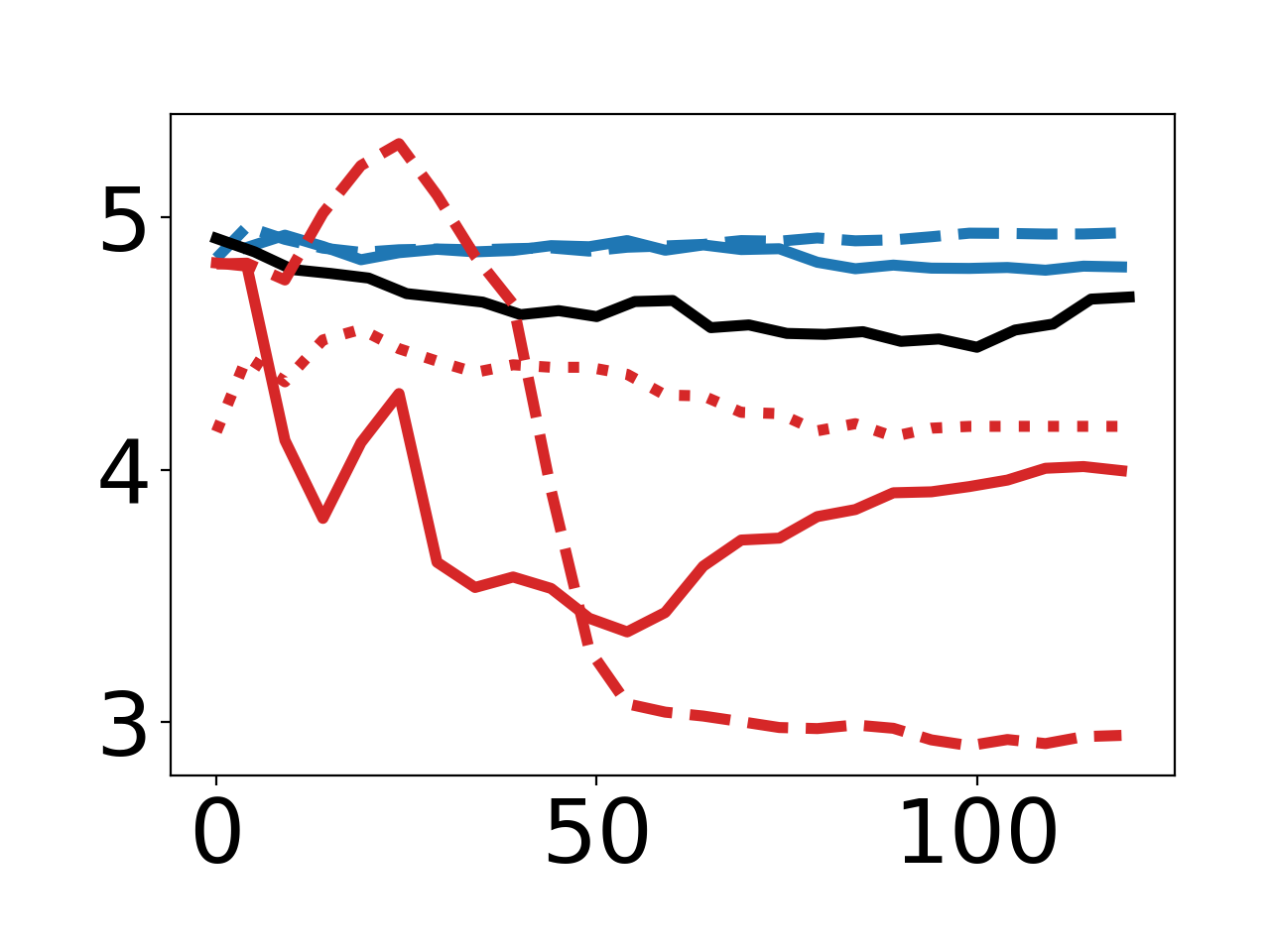} &
\includegraphics[width=0.9\linewidth, trim=1cm 1cm 1cm 1cm, clip]{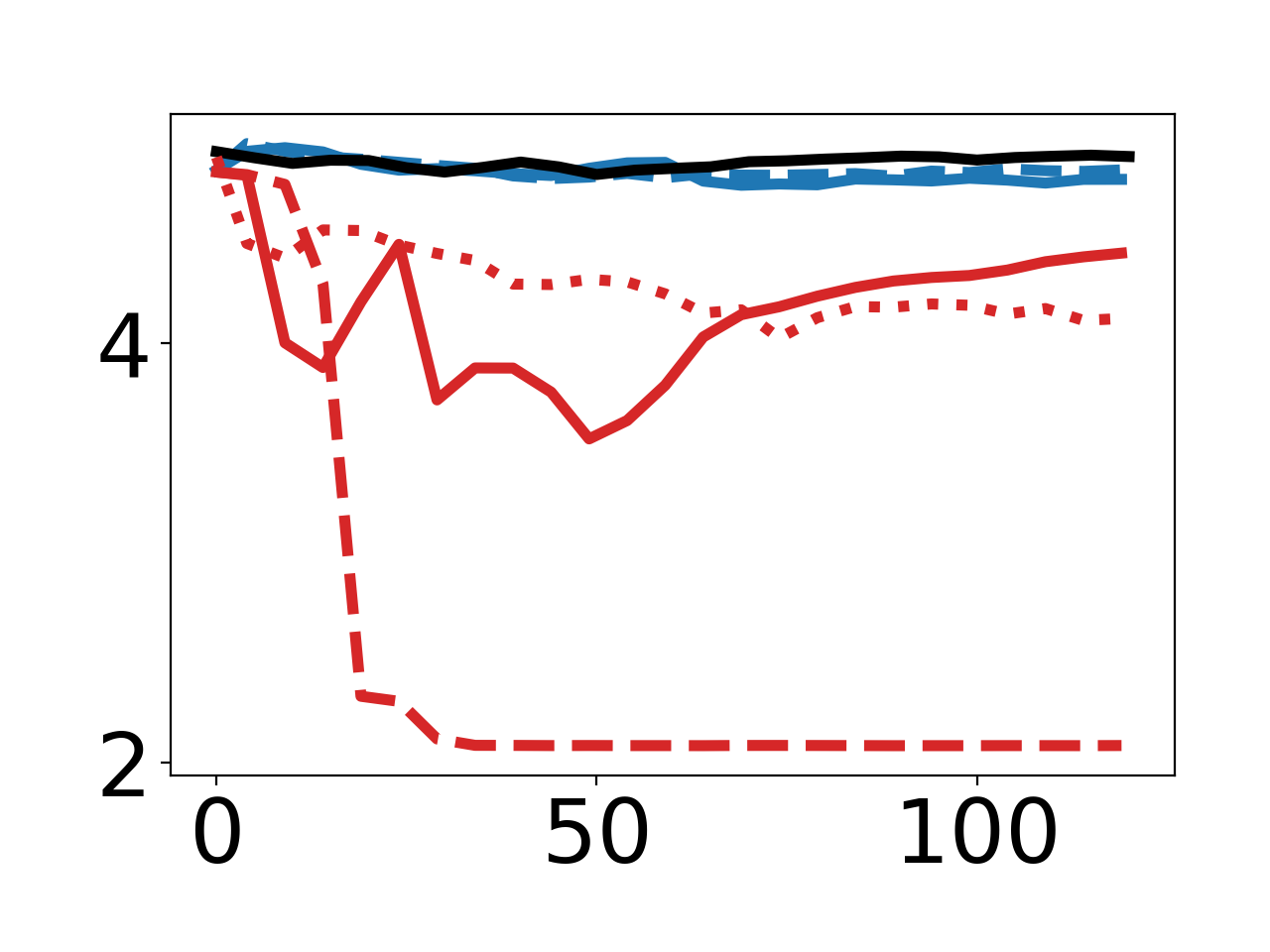} &
\includegraphics[width=0.9\linewidth, trim=1cm 1cm 1cm 1cm, clip]{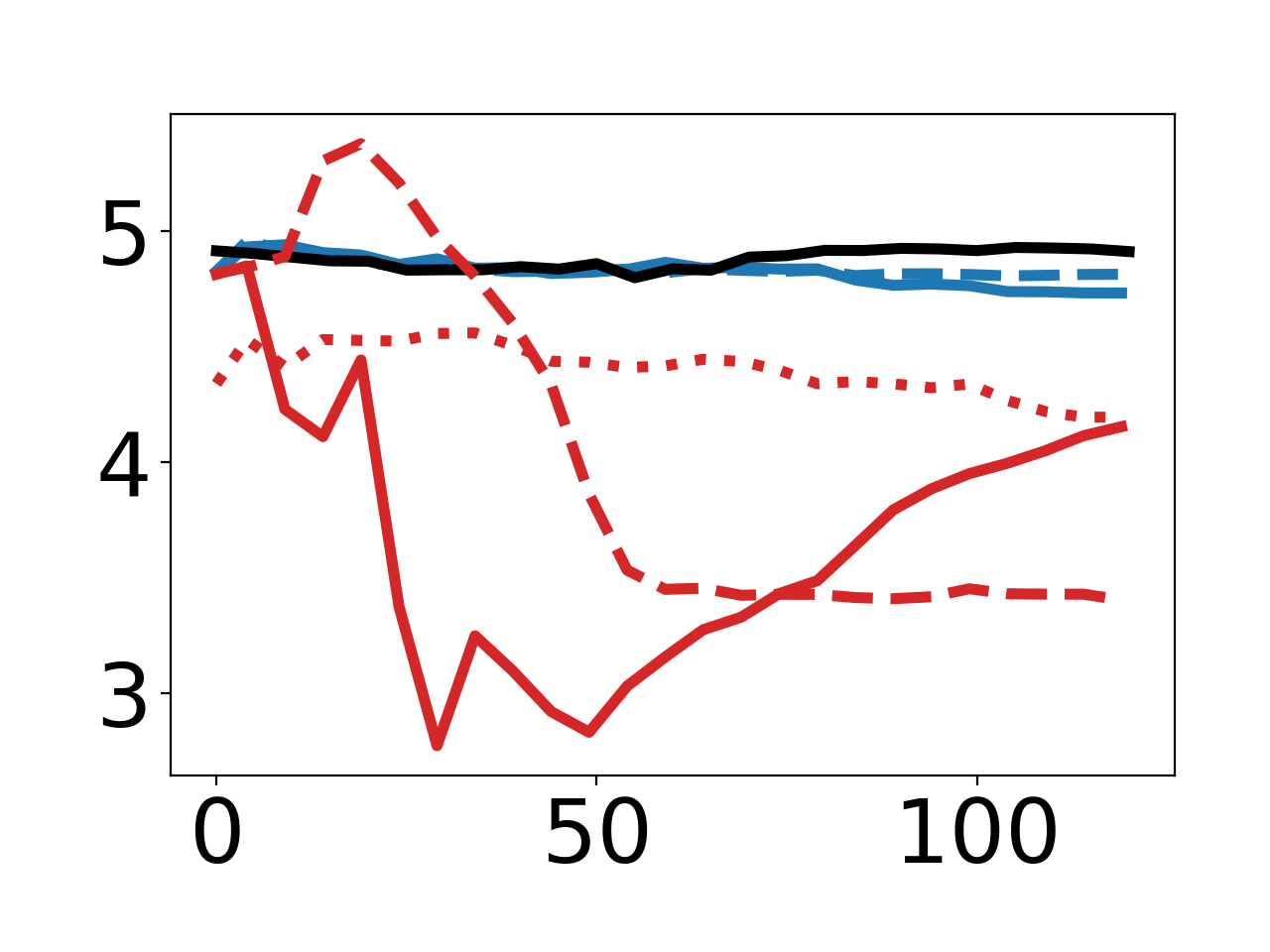}\\

\multicolumn{1}{c}{} &
\multicolumn{4}{c}{\scriptsize \textbf{GRPO Steps}}\\

\multicolumn{5}{c}{
    \includegraphics[width=\linewidth]{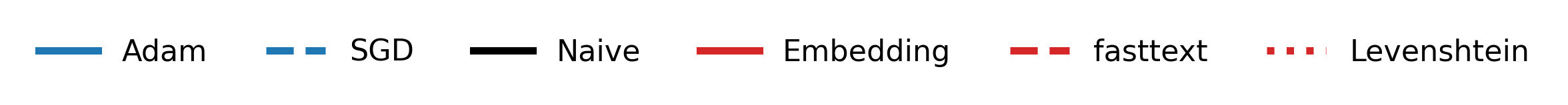}
}\\

\end{tabular}
}\\[-12pt]
\caption{Generator results when setting $\Phi$ to be post-training loss on four multilingual LAMBADA \citep{lambada} translations from \cite{multilingual_lambada}: DE, ES, FR, and IT. We initialized the generator from Llama 3.2 Instruct. We initialized the target model in $\mathcal{A}$ also from Llama 3.2 Instruct. In each GRPO step, we conduct a single step of target model continued pretraining on the synthetic data before computing metagradients. When using Adam in $\mathcal{A}$, the generator learns the correct language, as judged by GPT 4.1 Nano \citep{openai_gpt41_nano}. Baselines do not learn the correct language except in rare cases where their entropy quickly collapses and they repeatedly produce only a few words.}
\label{fig:langresults}
\end{figure*}

We now switch our analysis from the target model to the trained generator: does it learn interpretable generations? It is hard to know what data it should generate to lower the target model's norm or draw images in its weights. However, we would expect that if we made the target metric to lower the language modeling loss of the target model on a non-English language, the generator would eventually learn to rephrase the Wikipedia articles into that language. Is our \ourmethodshort\ approach powerful enough to guide the generator to perform this translation, even if the prompt does not mention translation and the Wikipedia articles are all English? We find that the Adam version of our approach is able to teach the generator to accomplish this feat, while other baselines are not. 

We conduct experiments in four different settings where $\Phi$ is language modeling loss on the train sets of DE, ES, FR, and IT LAMBADA \citep{lambada} translations from \cite{multilingual_lambada}. Note that the standard LAMBADA dataset only provides a single group of 5.15K examples, so we split it into train, val, and test sets of 2.32K, 515, and 2.32K examples, respectively. We only use the train set in our target metric. These splits were useful for our experiments in Appendix \ref{app:multilingualcpt}, which we discuss later in this section. We used Llama 3.2 Instruct as the target model, and used only one target model training step both in $\mathcal{A}$ and for validation. Otherwise, the setup is the same as the previous experiments. We train the generator with $M=120$ GRPO steps, using batches of $B=1024$ synthetic data examples.

We implement a variety of new baselines for this section:  ``Embedding'', ``fasttext'', and ``Levenshtein''. The Embedding baseline computes average embedding similarity of each rollout example with the LAMBADA examples, and this is used as the reward for RL instead of metagradient weights. The embeddings used are from \cite{arsen2025static}, and we use their provided similarity function. The fasttext baseline computes the fasttext language classification probability of the target language, for each rollout example, and uses this as the reward. The fasttext model we use is from \cite{grave2018learning}. Finally, the Levenshtein baseline uses as rewards the average negative Levenshtein distance \citep{levenshtein} between each rollout example and the LAMBADA examples.

We show in Figure \ref{fig:langresults} that the Adam version of \ourmethodshort\ GRPO is the only algorithm to reliably teach the generator to translate its rephrases into the correct non-English language. The generator does this while maintaining the entropy of the rephrases (no clear mode collapses).

Appendix \ref{app:multilingualcpt} shows that we can take Llama 3.2 Instruct (and Llama 3.2 Base, for which the generator was not explicitly optimized) and train it on 10M tokens from our tuned generator to get high benchmark performance relative to a variety of baselines. This amount of synthetic CPT data is more than the single step of training data for which the generator was explicitly optimized. In these validations, we train in PyTorch \citep{Ansel_PyTorch_2_Faster_2024}, whereas the Llama 3.2 Instruct in $\mathcal{A}$ used JAX \citep{jax2018github} implementations. We also evaluate benchmark performance via perplexity in the Eleuther Eval Harness \citep{eval-harness}, which is slightly different than $\Phi$'s language modeling loss -- yet there is transfer.

\subsection{Interpreting the Generated Data when Targeting Loss on a UUID}\label{sec:uuid}

\begin{figure}[t]
\centering
\sf
\labeledimg*{\small Fraction of Generator Val Rollouts with Correct UUID}{0.8\linewidth}{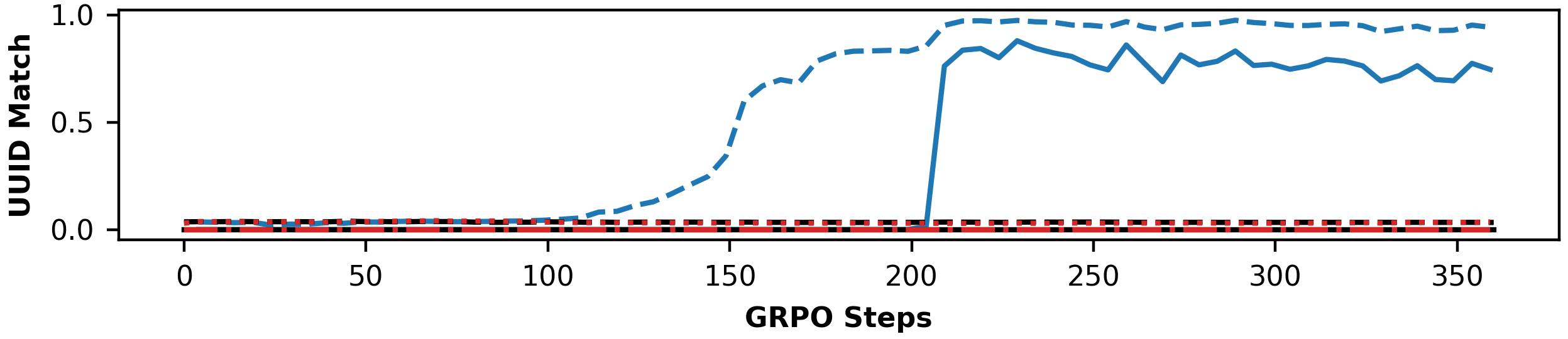}\\[-5pt]
\includegraphics[width=0.8\linewidth]{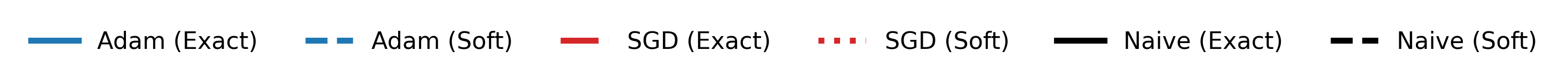}\\[-6pt]
\caption{We keep the same setup as the LAMBADA cases, with the exception of changing $\Phi$ to be the target model's post-training LM loss on a 32-character UUID. In this plot, we show two validation metrics: \textit{Exact} requires the complete UUID to be in a rollout, and \textit{Soft} finds the longest substring of the UUID in the rollout and gives points proportional to the fraction of the UUID present.}
\end{figure}

If we set the target metric to be language modeling loss on another language, the generator will learn to produce its Wikipedia paraphrases in that language. But, just how powerful is the metagradient signal on the rephrases? Can we teach the generator to generate an unnatural 32-character UUID that appears nowhere in the initial generator rollouts?

Here, we keep the same setting as the LAMBADA experiments, except: we change the target metric of the model from $\mathcal{A}$ to be language modeling loss on a 32-character UUID, conduct GRPO training for 3x as long, and set generator validation sampling temperature to zero. The generator learns to produce the UUID in the Adam case. In the SGD and Naive cases, the generator never learns to generate any component of the UUID with higher frequency.

\section{Conclusion}

We introduced \ourmethod, a new RL primitive for generating synthetic training data that can be optimized for any differentiable training or post-training target metric. We also presented theoretical arguments that \ourmethodshort\ RL keeps the policy gradient close to the ideal policy gradient, under typical assumptions. We then showcased that synthetic training data generated using \ourmethodshort\ RL can draw images in LLM weights, alter the $\ell^2$ norm of LLM weights, and target LLM benchmarks, all through standard SFT. Interestingly, it was important to use Adam inside of $\mathcal{A}$ for the computation of metagradients. This suggests that it could be useful to revisit influence function results \citep{koh2017influence}, which typically ignore the optimizer and the learning trajectory. Overall, this new framework for optimizing synthetic training data allows us to reach a new level of fine-grained targeting.

\section*{Implications}

\ourmethodshort\ may enable practitioners to intentionally steer models toward desirable capabilities using synthetic SFT examples. At the same time, this level of control has potential risks. If synthetic data generation can be optimized to induce arbitrary differentiable properties in trained models, adversaries could potentially craft subtle data poisoning attacks that target specific biases or behaviors. Understanding both the capabilities and risks of targeted synthetic data generation will be important as synthetic data becomes an increasingly central component of modern machine learning pipelines.

\section*{Acknowledgments}
We thank Christopher Mohri for conversations on the mathematical aspects of this work. TT is supported in part by the Stanford Graduate Fellowship and in part by the Amazon AI Fellowship. SP was supported in part by a HAI Hoffman-Yee grant. HB thanks the Aker Scholarship Foundation for financial support. LB is supported in part by the Stanford Graduate Fellowship and in part by the FLI Vitalik Buterin Fellowship. NB acknowledges support from an NSF Graduate Research Fellowship, Quad Fellowship, and Mercor Graduate Fellowship. CP acknowledges support from Google and Open Philanthropy (Coefficient Giving). TH was supported by a grant by HAI, DSO labs, gifts from Open Philanthropy, Amazon, Schmidt Sciences, the Tianqiao and Chrissy Chen Foundation and a grant under the NSF CAREER IIS-2338866, ONR N00014-24-1-2609, and DARPA Cooperative Agreement HR00112520013. This work does not necessarily reflect the position or policy of the government and no official endorsement should be inferred.

\bibliography{colm2026_conference}
\bibliographystyle{colm2026_conference}

\newpage

\appendix

\section{Proofs}
\label{app:proofs}

\subsection{Assumptions}

These are all fairly standard first and second order smoothness conditions.

\begin{assumption}[Smoothness of the policy gradient]
For any $\theta$, there is a constant $G_1 \in \mathbb{R}$ such that:
\begin{align*}
&||\nabla_\theta \log \pi_\theta ||_2 \leq G_1.
\end{align*}
\end{assumption}

\begin{assumption}[Smoothness of the policy hessian]
For any $\theta$, there is a constant $G_2 \in \mathbb{R}$ such that:
\begin{align*}
&||\nabla^2_\theta \log \pi_\theta ||_{op} \leq G_2.
\end{align*}
\end{assumption}

\begin{assumption}[metasmoothness of the policy hessian]
For any $\theta$, there is a constant $G_3 \in \mathbb{R}$ such that:
\begin{align*}
||\nabla_\theta^2 \mathbb{E}_{D\sim\pi_\theta} [\Phi(\mathcal{A}(\bm{w}, D))]||_{op} \leq G_3.
\end{align*}
\end{assumption}

\begin{assumption}[SGD assumption]
$\mathcal{A}(\bm{w},D)$ (and $\mathcal{A}(D)$) are defined as the last iterate of SGD, $\phi_{|D|}$, where each $\phi_t$ is defined as an iterate where $D\coloneqq\{z_1 \cdots z_n\}$ and $\phi_t = \phi_{t-1} - \eta \nabla \ell(\phi_{t-1}, z_t)$.
\end{assumption}

\begin{assumption}[SGD loss smoothness]
$\ell$ in A4 is $L_\ell $-smooth, Convex, and Lipschitz.
\end{assumption}

\begin{assumption}[SGD gradient bounds]
Gradient norms are bounded at some point in the optimization space. For some constant $C \in \mathbb{R}$:
\begin{align*}
\sup_\theta \inf_{\phi'} \mathbb{E}_{z \sim \pi_\theta}[||\nabla \ell(\phi', z)||_2] \leq C.
\end{align*}
\end{assumption}

\begin{assumption}[SGD loss bounds]
The minimum eigenvalue of the covariance of $\nabla \ell$ is lower bounded by some positive $\lambda_{\min} \in \mathbb{R}$ for all $\phi$.
\end{assumption}

\begin{assumption}[metagradient target Lipschitz continuity]
$||\nabla_\phi \Phi(\phi)||_{op} \leq L_\Phi$ and $\Phi$ is bounded by $\Phi_{\max} \in \mathbb{R}$
\end{assumption}

\subsection{Lemma 1}

\begin{lemma}
Both $F(\theta)$ and $F'(\theta,  p)$ are L-smooth
\end{lemma}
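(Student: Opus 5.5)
We must show that the Hessians of $F(\theta)=\mathbb{E}_{D\sim\pi_\theta}[\Phi(\mathcal{A}(D))]$ and of $F'(\theta,p)=\mathbb{E}_{D\sim p}[\Phi(\mathcal{A}(\pi_\theta/p,D))]$ in $\theta$ are bounded in operator norm by a constant $L$ that does not depend on $\theta$ (or, for $F'$, on the evaluation point of interest $p=\pi_{\theta_0}$).

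\textbf{The case of $F$.} Assumption A3 with $\bm{w}=\bm{1}$ gives $\|\nabla^2_\theta F(\theta)\|_{op}\le G_3$ directly, since $\mathcal{A}(\bm{1},D)=\mathcal{A}(D)$. For a self-contained argument not relying on A3, we can use the score-function representation instead. Differentiating twice under the expectation gives
\[
\nabla^2_\theta F=\mathbb{E}_{D\sim\pi_\theta}\big[\Phi(\mathcal{A}(D))\,(\nabla^2\log\pi_\theta(D)+\nabla\log\pi_\theta(D)\nabla\log\pi_\theta(D)^\top)\big].
\]
Here $\log\pi_\theta(D)=\sum_i\log\pi_\theta(x_i)$. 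A1, A2 and A8 then bound this by $\Phi_{\max}(nG_2+n^2G_1^2)$, where $n=|D|$ is fixed. Either route makes $F$ $L$-smooth.

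\textbf{The case of $F'$: first derivative.} Only the weights $w_i(\theta)=\pi_\theta(x_i)/p(x_i)$ depend on $\theta$. By the chain rule,
\[
\nabla_\theta F'=\mathbb{E}_{D\sim p}\Big[\sum_i \partial_{w_i}\Phi(\mathcal{A}(\bm{w},D))\,w_i\nabla\log\pi_\theta(x_i)\Big].
\]

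\textbf{The case of $F'$: second derivative.} Differentiating again produces three kinds of terms:
\begin{enumerate}
\item $\partial_{w_i}\partial_{w_j}\Phi\cdot w_iw_j\nabla\log\pi_\theta(x_i)\nabla\log\pi_\theta(x_j)^\top$;
\item $\partial_{w_i}\Phi\cdot w_i(\nabla\log\pi_\theta(x_i)\nabla\log\pi_\theta(x_i)^\top+\nabla^2\log\pi_\theta(x_i))$;
\item no further terms, since $\nabla w_i=w_i\nabla\log\pi_\theta(x_i)$ has already been accounted for.
\end{enumerate}
A1 and A2 bound the policy factors. The derivatives of $\Phi\circ\mathcal{A}$ in $\bm{w}$ are controlled through the SGD recursion of A4, $\phi_t=\phi_{t-1}-\eta w_t\nabla\ell(\phi_{t-1},z_t)$:
\begin{itemize}
\item By A5 the Jacobian $\partial\phi_t/\partial\phi_{t-1}=I-\eta w_t\nabla^2\ell$ is nonexpansive for $\eta w_t\le 2/L_\ell$, by convexity and $L_\ell$-smoothness.
\item $\partial\phi_T/\partial w_t=-\eta\nabla\ell(\phi_{t-1},z_t)$ propagated through nonexpansive maps, so it is bounded by $\eta$ times the Lipschitz constant of $\ell$.
\item With A8 this gives $|\partial_{w_i}\Phi|\le L_\Phi\eta\,\mathrm{Lip}(\ell)$.
\end{itemize}

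\textbf{Main obstacle.} The mixed second derivatives $\partial_{w_i}\partial_{w_j}\Phi$ involve $\nabla^2_\phi\Phi$ and third derivatives of $\ell$, which A1--A8 do not directly control. I would first try to absorb these terms into the metasmoothness A3. The reweighted objective is an importance-sampling representation of the same map $\theta\mapsto\mathbb{E}$, so at $p=\pi_\theta$ the Hessian of $F'$ coincides with the quantity A3 bounds, with $\bm{w}$ evaluated at the importance weights. A3 is stated for arbitrary $\bm{w}$, and that is exactly the flexibility needed. If this identification fails away from $p=\pi_\theta$, I would restrict to a neighbourhood where $w_i\in[1/2,2]$, which A1 guarantees for small $\|\theta-\theta_0\|$, since $\log w_i$ changes by at most $G_1\|\theta-\theta_0\|$. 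That local smoothness is all the main theorem requires. Collecting the constants gives a single $L$ depending on $G_1,G_2,G_3,L_\Phi,\Phi_{\max},\eta,n$.
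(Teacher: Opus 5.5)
Your argument for $F$ is fine and is the paper's own: the score-function Hessian, bounded through A1, A2 and A8. Your constant $\Phi_{\max}(nG_2+n^2G_1^2)$ is in fact more careful than the paper's $\Phi_{\max}(G_1^2+G_2)$, which implicitly applies A1--A2 to the log-likelihood of the whole dataset. The gap is in the $F'$ half.

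You correctly isolate the obstruction. The term $\partial_{w_i}\partial_{w_j}\Phi(\mathcal{A}(\bm w,D))\,w_iw_j\nabla\log\pi_\theta(x_i)\nabla\log\pi_\theta(x_j)^\top$ needs bounds on $\nabla^2_\phi\Phi$ and on derivatives of $\nabla^2_\phi\ell$ along the SGD trajectory, and A1--A8 provide none of these. But the bridge you then build to A3 does not hold. The weights $\pi_\theta(x_i)/p(x_i)$ enter the \emph{training loss} inside $\mathcal{A}$, not the outer expectation, so $F'$ is not an importance-sampling rewrite of $F$. In general $F'(\theta,\pi_{\theta_0})\neq F(\theta)$ for $\theta\neq\theta_0$; the mismatch between the second moments $\Sigma_F$ and $\Sigma_{F'}$ is exactly what Theorem 3.1 has to control. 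Likewise, the Hessian of $F'(\cdot,\pi_{\theta_0})$ at $\theta_0$ is not $\nabla^2F(\theta_0)$. The former depends on the curvature of $\bm w\mapsto\Phi(\mathcal{A}(\bm w,D))$. The latter, $\mathbb{E}[\Phi\,(\nabla^2\log\pi_\theta+\nabla\log\pi_\theta\nabla\log\pi_\theta^\top)]$, involves $\Phi$ only through its values, so a bound on it says nothing about the former. Under the literal reading of A3 you used for $F$ ($\bm w$ held fixed, $\theta$ entering only through the sampling law), the Hessian A3 controls is of this score-function type, so it cannot absorb your term~1. Your fallback of restricting to a neighbourhood where $w_i\in[1/2,2]$ bounds the weights but not $\partial_{w_i}\partial_{w_j}\Phi$. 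Under A5 alone ($\ell$ merely $L_\ell$-smooth), that quantity need not even be bounded.

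For $F'$ the paper derives nothing. It reads A3 (``metasmoothness'') as a direct assumption that $\|\nabla^2_\theta F'(\theta,p)\|_{op}\le G_3$ and concludes with $L=\max(G_3,\Phi_{\max}(G_1^2+G_2))$. Your remark that A3, as typeset with $D\sim\pi_\theta$ and fixed $\bm w$, literally bounds $\nabla^2F$ at $\bm w=\bm 1$ is accurate, but it is not how the paper uses the assumption. To close your proof you have two options:
\begin{enumerate}
\item Adopt the paper's reading and say explicitly that the $F'$ half is assumed rather than proved. Your chain-rule computation then serves only to explain what A3 is assuming.
\item Add hypotheses, e.g.\ a bound on $\nabla^2_\phi\Phi$ and a Lipschitz Hessian for $\ell$, and propagate the second-order sensitivities $\partial^2\phi_T/\partial w_i\partial w_j$ through the SGD recursion. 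This gives a genuinely derived constant that is local (on $B(\theta_0,r)$ with $p=\pi_{\theta_0}$) and depends on $\eta$, $T$ and $n$. That local version is all that Lemma 4 and Theorem 3.1 need.
\end{enumerate}
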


\begin{proof}
The smoothness of $F(\theta)$ is straightforward from assumptions A1, A2, and A8. Per the definition of expected value and the standard log-derivative trick, the Hessian is
\begin{align*}
\nabla^2 F(\theta) = \mathbb{E}_{D \sim \pi_{\theta}}[\Phi(\mathcal{A}(D))\nabla^2 \log \pi_\theta + \Phi(\mathcal{A}(D)) \nabla \log \pi_\theta \nabla \log \pi_\theta^\top].
\end{align*}
If we upper bound the reward with $\Phi_{\text{max}}$ and have a $G_1$ bound on the log-policy gradient and $G_2$ bound on the hessian, we have: $||\nabla^2 F(\theta)||_{op} \leq \Phi_{\text{max}}(G_1^2+G_2)$.

For the smoothness of $F'(\theta, p)$, this follows by assumption A3 and is bounded by $G_3$.

Thus, the two functions are smooth with parameter $L\coloneqq\max(G_3, \Phi_{\text{max}}(G_1^2+G_2))$.
\end{proof}

\subsection{Lemma 2}

Let learning algorithm $\mathcal{A}$ be SGD operating on $x \sim \pi_\theta$, performing gradient descent on $\ell(\phi,x)$ to minimize $\mathbb{E}_{x\sim \pi_{\theta}}[\ell(\phi,x)]$.

We show that the SGD iterates defined by 
$$
\phi_k \coloneqq \phi_{k-1} - \eta\nabla\ell(\phi_{k-1}, x_{k-1})
$$
with $x_{k} \sim \pi_{\theta}$ converges to its SDE equivalent in the small-step-size limit, with the limit defined by the following SDE,
$$
d\overline{\phi}_t \coloneqq -\nabla \mathbb{E}_{x\sim \pi_{\theta}} \ell(\overline{\phi}_t, x) dt +\sqrt{\eta}~\Sigma(\overline{\phi}_t)^{1/2} dW_t 
$$
with $\Sigma(\overline{\phi}_t)=\text{Cov}(\nabla \ell(\overline{\phi}_t, x))$, the gradient covariance. Concretely, the distribution of the SDE and SGD iterate is close in Wasserstein distance:
$$
\max_{k\eta \leq T} W_2(\overline{\phi}_{k\eta},\phi_{k}) \leq C(\eta^{1/2}B^{-1} +\eta)(1+\log \eta^{-1}),
$$
where $B$ is the SGD microbatch size and $C$ is some finite positive constant.
\begin{proof}
By Corollary 2 from \cite{pmlr-v134-fontaine21a} there exists a coupling of $\overline{\phi}$ and $\phi$ such that,
$$
\max_{k\eta \leq T} \mathbb{E}_{x \sim \pi_\theta}[||\overline{\phi}_{k\eta}-\phi_{k}||^2]^{1/2} \leq C(\eta^{1/2}B^{-1} +\eta)(1+\log \eta^{-1})
$$
Where the constants depend on the constants for the bounds in A1-A3 and time horizon

This immediately implies a bound on the Wasserstein distance,
$$
\max_{k\eta \leq T} W_2(\overline{\phi}_{k\eta},\phi_{k}) \leq C(\eta^{1/2}B^{-1} +\eta)(1+\log \eta^{-1})
$$
Corollary 2, however, relies on three assumptions that we must check in our setting:

A1 from \cite{pmlr-v134-fontaine21a} follows directly from the smoothness assumption on $\ell$ (our A5) since the expectation of a smooth function is itself smooth.

A2b from \cite{pmlr-v134-fontaine21a} requires per-sample gradients to be Lipschitz. The first two constraints follow from our A5 since per-example gradients are smooth. The last constraint follows from the our bounded gradient assumption (A6).

For A3 from \cite{pmlr-v134-fontaine21a}, smoothness and bounded gradients imply that the covariance matrices are Lipschitz, and for positive definite matrices with lower bounded eigenvalue, the square root is a contractive operation, which gives us the required result, with constant $\frac{L_\ell C}{\lambda_{\min}}$.

\end{proof}

\subsection{Lemma3}

\begin{lemma}
Define two SDEs with identical drift and similar diffusion terms, with convex $\nabla f$, as:
$$
dZ_t \coloneqq -\nabla f(Z_t)dt +\sqrt{\eta}\Sigma(Z_t)dW_t
$$
and
$$
dZ'_t\coloneqq-\nabla f(Z_t')dt + \sqrt{\eta}\Sigma'(Z_t')dW'_t,
$$
with uniform bounds on both drift and diffusion coefficients: $||\nabla f(z)||_2 \leq Q$, $||\Sigma(z)||_{op} \leq S$, $||\Sigma'(z)||_{op} \leq S'$, for $Q, S, S' \in \mathbb{R}$. Then
$$
\sup_{t\in[0,T]}W_2(Z_t,Z'_t)\leq \sqrt{\eta T} \sup_Z ||\Sigma(Z)-\Sigma'(Z)||_F.
$$
\end{lemma}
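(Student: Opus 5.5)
We couple the two SDEs synchronously and bound the resulting mean-squared gap. Concretely, we drive both equations with the same Brownian motion, $W'_t = W_t$, and start them from the same initial point, $Z_0 = Z'_0$; this is the situation in which the lemma is applied, since the two trained target models share their initialization. Under any such coupling, $W_2(Z_t,Z'_t)^2 \le \mathbb{E}\|Z_t - Z'_t\|^2$. So it suffices to bound $\Delta_t \coloneqq Z_t - Z'_t$ in $L^2$, uniformly over $t\in[0,T]$.

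The first step applies It\^o's formula to $\|\Delta_t\|^2$:
\begin{align*}
d\|\Delta_t\|^2 = -2\langle \Delta_t, \nabla f(Z_t)-\nabla f(Z'_t)\rangle dt + 2\sqrt{\eta}\langle \Delta_t, (\Sigma(Z_t)-\Sigma'(Z'_t))dW_t\rangle + \eta\|\Sigma(Z_t)-\Sigma'(Z'_t)\|_F^2 dt.
\end{align*}
The drift term is nonpositive. We read the hypothesis ``convex $\nabla f$'' as convexity of $f$, which makes $\nabla f$ monotone, so $\langle \Delta_t, \nabla f(Z_t)-\nabla f(Z'_t)\rangle \ge 0$. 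This is where the identical-drift assumption is used: the drift contributes only a contraction. The uniform bounds $Q, S, S'$ guarantee that the stochastic integral is a true martingale, at least after a standard localization. Its expectation therefore vanishes, and we obtain
\begin{align*}
\mathbb{E}\|\Delta_t\|^2 \le \eta\int_0^t \mathbb{E}\|\Sigma(Z_s)-\Sigma'(Z'_s)\|_F^2\, ds.
\end{align*}

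The main obstacle is that the integrand compares $\Sigma$ and $\Sigma'$ at two \emph{different} points, $Z_s$ and $Z'_s$, while the stated bound involves only the same-point discrepancy $\sup_Z\|\Sigma(Z)-\Sigma'(Z)\|_F$. A naive split $\Sigma(Z_s)-\Sigma'(Z_s) + \Sigma'(Z_s)-\Sigma'(Z'_s)$ introduces a Lipschitz term $L_{\Sigma'}\|\Delta_s\|$. Gr\"onwall would then give the bound up to a factor $e^{\eta L_{\Sigma'}^2 T}$, and this factor is $1+O(\eta T)$ in the small-$\eta$ regime of interest. To obtain the clean constant I would try two routes. The first is to use the strong monotonicity margin from the drift to absorb the cross term. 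The second, which is more likely to work, is to interpret the supremum in the statement as ranging over pairs of points visited by the coupled processes, i.e.\ to bound $\|\Sigma(Z_s)-\Sigma'(Z'_s)\|_F$ directly by $\sup\|\Sigma-\Sigma'\|_F$ in that sense. That is the level of precision at which the lemma enters the main theorem anyway, since only the $\sqrt{\eta T}$ order matters there.

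Granting this bound on the integrand, we get $\mathbb{E}\|\Delta_t\|^2 \le \eta t \sup_Z\|\Sigma(Z)-\Sigma'(Z)\|_F^2$. Taking square roots and the supremum over $t\le T$ yields $\sup_{t\in[0,T]} W_2(Z_t,Z'_t)\le \sqrt{\eta T}\,\sup_Z\|\Sigma(Z)-\Sigma'(Z)\|_F$, which is the claim.
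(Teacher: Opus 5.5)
Your argument matches the paper's step for step: synchronous coupling with $W'=W$ and $Z_0=Z'_0$, It\^o's formula for $\|\Delta_t\|^2$, monotonicity of $\nabla f$, and a vanishing martingale term. The paper also stops exactly where you do. It replaces $\mathbb{E}\|\Sigma(Z_t)-\Sigma'(Z'_t)\|_F^2$ by $\sup_Z\|\Sigma(Z)-\Sigma'(Z)\|_F^2$, calling this ``a relatively loose, uniform bound,'' with no more justification than your ``granting this bound.'' The two-point problem you isolated is a genuine gap, and it cannot be closed, because the lemma as stated is false. Take $d=2$, $f\equiv 0$, $Z=(X,Y)$, $\Sigma(x,y)=\mathrm{diag}(1,S\tanh x)$ and $\Sigma'(x,y)=\mathrm{diag}(1+\epsilon,S\tanh x)$, with both processes started at the origin. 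All hypotheses hold and $\sup_Z\|\Sigma(Z)-\Sigma'(Z)\|_F=\epsilon$. By It\^o's isometry, $\mathbb{E}Y_T^2=\eta S^2\alpha$ and $\mathbb{E}(Y'_T)^2=\eta S^2\alpha'$, where $\alpha=\int_0^T\mathbb{E}\tanh^2(\sqrt{\eta}W_s)\,ds$ is strictly smaller than $\alpha'=\int_0^T\mathbb{E}\tanh^2((1+\epsilon)\sqrt{\eta}W_s)\,ds$. Under every coupling, Minkowski's inequality gives $(\mathbb{E}\|Z_T-Z'_T\|^2)^{1/2}\ge\|Y'_T\|_{L^2}-\|Y_T\|_{L^2}$. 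Hence $W_2(Z_T,Z'_T)\ge S\sqrt{\eta}\,(\sqrt{\alpha'}-\sqrt{\alpha})$, which exceeds $\sqrt{\eta T}\,\epsilon$ once $S$ is large. Any correct bound must therefore depend on the regularity of the diffusion coefficients; here their Lipschitz constant is $S$. Neither of your rescues works. Only convexity is assumed (the example has $f\equiv 0$), so there is no strong-monotonicity margin to absorb the cross term. Replacing the supremum by $\sup_{z,z'}\|\Sigma(z)-\Sigma'(z')\|_F$ proves a different statement, whose right-hand side does not vanish as $\Sigma'\to\Sigma$.

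That last point also undercuts your remark that only the $\sqrt{\eta T}$ order matters downstream. In the proof of Theorem~3.1, $\epsilon_1(r)=\sqrt{\eta T}\,D_\Sigma(r)L_\Phi$ is divided by $r$ when Lemma~4 is applied. The $\sqrt{\eta T}$ term survives only because the \emph{same-point} discrepancy $\|\Sigma_F^{1/2}-\Sigma_{F'}^{1/2}\|_F$ is $O(r)$ through the $\chi^2$ estimate. A pair supremum does not shrink with $r$: it is at least $\sup_{z,z'}\|\Sigma(z)-\Sigma(z')\|_F-\varepsilon_\Sigma$, so it would leave a term of order $\sqrt{\eta T}/r$. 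The repair is the Gr\"onwall route you set aside. Assume $\Sigma'$ is $L_{\Sigma'}$-Lipschitz in Frobenius norm; the paper asserts a Lipschitz bound of this kind for the square-root covariance in the proof of Lemma~2, though it is not a hypothesis of this lemma. Set $\varepsilon_\Sigma=\sup_Z\|\Sigma(Z)-\Sigma'(Z)\|_F$ and bound the integrand by $(\varepsilon_\Sigma+L_{\Sigma'}\|\Delta_s\|)^2\le 2\varepsilon_\Sigma^2+2L_{\Sigma'}^2\|\Delta_s\|^2$. Applying Gr\"onwall to $\mathbb{E}\|\Delta_t\|^2\le 2\eta\varepsilon_\Sigma^2t+2\eta L_{\Sigma'}^2\int_0^t\mathbb{E}\|\Delta_s\|^2\,ds$ gives $\sup_{t\le T}W_2(Z_t,Z'_t)\le\sqrt{2}\,e^{\eta L_{\Sigma'}^2T}\sqrt{\eta T}\,\varepsilon_\Sigma$. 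This keeps the linear dependence on the same-point discrepancy that the main theorem needs, and the constant is absorbed into its $O(\cdot)$. Prove this version rather than chase a regularity-free constant, which the example rules out.
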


\begin{proof}
We want a Wasserstein result, so we can couple the two sequences by choosing $dW_t=dW_t'$ and the same initialization $Z_0 = Z'_0$. Now define the difference sequence $\Delta_t \coloneqq Z_t-Z_t'$ with the associated SDE
$$
d\Delta_t \coloneqq -(\nabla f(Z_t)- \nabla f(Z_t'))dt + \sqrt{\eta}(\Sigma(Z_t)-\Sigma'(Z'_t))dW_t.
$$
Now, we bound the $\ell^2$ distance of the two processes, which is the $\ell^2$ norm of $\Delta_t$. By Ito’s formula \citep{It1951OnAF}, 
\begin{align*}
d||\Delta_t||^2 &= 2\Delta_t d\Delta_t + \text{Tr}(\eta(\Sigma(Z_t)-\Sigma'(Z'_t))(\Sigma(Z_t)-\Sigma'(Z'_t))^\top)dt\\
&=2 \Delta_t (-\nabla f(Z_t)+\nabla f(Z'_t))dt + 2\sqrt\eta\Delta_t (\Sigma(Z_t)-\Sigma'(Z'_t))dW_t + \eta||\Sigma(Z_t)-\Sigma'(Z'_t)||_F^2dt.
\end{align*}
We know that $\Delta_t (-\nabla f(Z_t)+\nabla f(Z'_t)) \leq 0$ (since $(\nabla f(x)-\nabla f(y))(x-y) \geq 0$ for convex functions). Thus,
$$
d||\Delta_t||^2 \leq 2\sqrt\eta\Delta_t (\Sigma(Z_t)-\Sigma'(Z'_t))dW_t + \eta||\Sigma(Z_t)-\Sigma'(Z'_t)||_F^2dt.
$$
Now we argue that $dM_t \coloneqq 2\sqrt\eta\Delta_t (\Sigma(Z_t)-\Sigma'(Z'_t))dW_t$ is associated with a martingale $M_t$, and thus if we take the expectation and time integral of both sides of this inequality, the $M_t$ term will vanish.

Note that 
$$
M_t \coloneqq 2\sqrt{\eta} \int_0^t \Delta_s (\Sigma(Z_s)-\Sigma'(Z'_s))dW_s
$$
is an Ito integral, and therefore if we have that the integrand $H_s\coloneqq 2\sqrt\eta\Delta_s(\Sigma(Z_s)-\Sigma'(Z'_s))$ is adapted and square-integrable, then $M_t$ is a martingale.

All the time-dependent terms in $H_s$ are driven by the same brownian motion $dW_s$, and thus the process is adapted.

For the second condition, we need to show the square integrability of $\mathbb{E}[\int_0^T||\Delta_s(\Sigma(Z_s)-\Sigma'(Z'_s))||_F^2 ds] < \infty$. Uniform bounds on both the drift and diffusion coefficients suffice to ensure square integrability.

With this martingale result in hand, we are done as we can take expectations of both sides, and $\mathbb{E}[dM_t] = 0$. So
$$
\mathbb{E}[||\Delta_T||^2] = \int_0^T\frac{d}{dt} \mathbb{E}[ ||\Delta_t||^2]  \leq \int_0^T\frac{d}{dt}\eta \mathbb{E}[||\Sigma(Z_t)-\Sigma'(Z'_t)||_F^2].
$$
We take a relatively loose, uniform bound which gives
$$
\mathbb{E}[||\Delta_t||^2] \leq \eta T \sup_{Z}||\Sigma(Z)-\Sigma'(Z)||_F^2.
$$
This immediately gives the Wasserstein bound as desired:
$$
\sup_{t\in[0,T]}W_2(Z_t,Z'_t)\leq \sqrt{\eta T} \sup_Z ||\Sigma(Z)-\Sigma'(Z)||_F.
$$
\end{proof}

\subsection{Lemma 4}

\begin{lemma}
Fix $\theta_0 \in \mathbb{R}^d$ and $r>0$.  
Let $g_1, g_2 : \mathbb{R}^d \to \mathbb{R}$ be $L$-smooth on the ball
\[
B(\theta_0,r) \coloneqq \{\theta \in \mathbb{R}^d : \|\theta-\theta_0\|_2 \le r\},
\]
i.e.,
\[
\|\nabla g_i(\theta) - \nabla g_i(\theta')\|_2
\le
L\|\theta-\theta'\|_2
\quad
\forall \theta,\theta' \in B(\theta_0,r),\ i\in\{1,2\}.
\]
Assume further that
\[
\sup_{\theta \in B(\theta_0,r)} |g_1(\theta) - g_2(\theta)| \le \varepsilon.
\]
Then
\[
\|\nabla g_1(\theta_0) - \nabla g_2(\theta_0)\|_2
\le
\frac{2\varepsilon}{r} + Lr.
\]
\end{lemma}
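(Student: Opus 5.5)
We reduce to a statement about the difference $h \coloneqq g_1 - g_2$. The function $h$ is $2L$-smooth on $B(\theta_0,r)$ and satisfies $|h| \le \varepsilon$ there. A direct $2L$ bound would give $\frac{2\varepsilon}{r} + 2L\cdot\frac{r}{2} = \frac{2\varepsilon}{r}+Lr$ after the Taylor step below, so either route lands on the stated constant. To keep the bookkeeping clean, we work with $g_1$ and $g_2$ separately in the Taylor expansion.

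Let $v \coloneqq \nabla g_1(\theta_0) - \nabla g_2(\theta_0)$. If $v=0$ there is nothing to prove. Otherwise set $u \coloneqq v/\|v\|_2$ and $\theta_1 \coloneqq \theta_0 + r u$, which lies in $B(\theta_0,r)$. The segment from $\theta_0$ to $\theta_1$ also lies in the ball, since the ball is convex. For each $i$, the standard descent-lemma consequence of $L$-smoothness along this segment gives
\[
\left| g_i(\theta_1) - g_i(\theta_0) - r\,\nabla g_i(\theta_0)^\top u \right| \le \tfrac{L}{2} r^2 .
\]
To derive it, write $g_i(\theta_1)-g_i(\theta_0)=\int_0^1 \nabla g_i(\theta_0+s r u)^\top r u\,ds$ and bound $\|\nabla g_i(\theta_0+sru)-\nabla g_i(\theta_0)\|_2 \le L s r$. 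This uses the Lipschitz-gradient condition only inside the ball, which is all we are given.

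Subtract the two inequalities. This yields
\[
r\|v\|_2 = r\, v^\top u \le |h(\theta_1)| + |h(\theta_0)| + L r^2 \le 2\varepsilon + L r^2 ,
\]
where $h(\theta_1)$ and $h(\theta_0)$ are bounded by $\varepsilon$ via the sup assumption. Dividing by $r$ gives $\|v\|_2 \le \frac{2\varepsilon}{r} + Lr$, as claimed.

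There is no real obstacle here. The only points needing care are that the chosen direction keeps the whole segment inside the ball, so the smoothness hypothesis applies, and that the two function-value errors each contribute $\varepsilon$. Together these produce the factor $2$ in $\frac{2\varepsilon}{r}$, and the two quadratic remainders of $\frac{L}{2}r^2$ sum to $Lr^2$.
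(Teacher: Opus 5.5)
Your proof is correct and follows the paper's argument. Both restrict $g_1-g_2$ to the segment from $\theta_0$ in the unit direction of $\nabla g_1(\theta_0)-\nabla g_2(\theta_0)$, bound the first-order Taylor remainder by $Lr^2$, and use $|g_1-g_2|\le\varepsilon$ at both endpoints with step length $r$. The one difference is in how the remainder is controlled: you use the integral form together with the Lipschitz gradient. The paper uses the Lagrange form with a bound on $f''$, which tacitly assumes a twice differentiability that $L$-smoothness alone does not provide, so your version is slightly cleaner.
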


\begin{proof}
Our approach is to consider one-dimensional linearizations of $g_1-g_2$ and bound the first derivative of every linearization, which suffices to bound the gradient. 

For any $d$ dimensional pairs of functions $g_1$ and $g_2$ , we can consider a 1-dimensonal slice along a unit vector $u$:
$$
f_{\theta_0,u}(t) \coloneqq g_1(\theta_0+tu)-g_2(\theta_0+tu)
$$
Now for any $t\in[0,r]$ this $f$ is $2L$-smooth ($f_{\theta_0,u}$ is the difference of two L-smooth functions), and its value is bounded by $\epsilon$.

By the taylor approximation (with remainder in lagrange form),
$$
f_{\theta_0,u}(t)=f_{\theta_0,u}(0)+t f'_{\theta_0,u}(0)+\frac{t^2}{2}f''_{\theta_0,u}(\nu_t)
$$
for some $\nu_t \in (0,t)$. We can solve for $f'$ and apply the first and second derivative bounds to get
$$
|tf'_{\theta_0,u}(0)|\leq 2\epsilon + t^2L,
$$
which implies $|f'_{\theta_0,u}(0)| \leq \frac{2\epsilon}{t} + t L$ for $t\in[0,r]$. We can substitute $t=r$ for a valid bound.\footnote{This can be loose if $r$ is large, in which case we could pick $t=2\sqrt{\frac{\epsilon}{2L}}$ instead, but we are not in that regime.}

Now pick  $u=\frac{\nabla g_1(\theta_0)-\nabla g_2(\theta_0)}{||\nabla g_1(\theta_0)-\nabla g_2(\theta_0)||_2}$, then
$$
|f'_{\theta_0,u}(0)| = ||\nabla g_1(\theta_0)-\nabla g_2(\theta_0)||_2 \leq \frac{2\epsilon}{r} + rL.
$$
\end{proof}

\subsection{Theorem 3.1}
\label{app:proofmainthm}

\mainthm*

\begin{proof}

The main work of this proof is in showing that $F(\theta)$ and $F'(\theta, \pi_{\theta_0})$ are close for all $||\theta - \theta_0|| \leq r$, and then combining this result with Lemmas 4 and 1 to obtain closeness of the gradients.

We first write down the first and second moments of the unweighted $\mathcal{A}$ target model gradient for $F$ and the weighted one for $F'$. For the first moment, note that the weighted loss and the unweighted loss coincide exactly:
\begin{align*}
\mathbb{E}_{x_{k-1} \sim \pi_{\theta}}[\nabla_{\phi_{k-1}} \ell(\phi_{k-1}, x_{k-1})] = \mathbb{E}_{x_{k-1} \sim \pi_{\theta_0}}\left[\nabla_{\phi_{k-1}} \frac{\pi_{\theta}}{\pi_{\theta_0}}\ell(\phi_{k-1}, x_{k-1})\right].
\end{align*}
For the second moment, let:
\begin{align*}
v(\phi_{k-1}, x_{k-1}) &\coloneqq \nabla_{\phi_{k-1}} \ell(\phi_{k-1}, x_{k-1})\\
\Sigma_F &\coloneqq \mathbb{E}_{x_{k-1} \sim \pi_{\theta}} \left[v(\phi_{k-1}, x_{k-1})v(\phi_{k-1}, x_{k-1})^{\top}\right]\\
\Sigma_{F'} &\coloneqq \mathbb{E}_{x_{k-1} \sim \pi_{\theta_0}} \left[\frac{\pi_{\theta}^2}{\pi_{\theta_0}^2}v(\phi_{k-1}, x_{k-1})v(\phi_{k-1}, x_{k-1})^{\top}\right].
\end{align*}
We see that the two second moments are not equal due to the square term. But, we can bound the Frobenius norm of their difference. First note that, using two applications of change of measure, we can write:
\begin{align*}
\Sigma_{F}-\Sigma_{F'} = \mathbb{E}_{x_{k-1}\sim \pi_\theta}\left[\left(1-\frac{\pi_{\theta}}{\pi_{\theta_0}}\right)v(\phi_{k-1},x_{k-1})v(\phi_{k-1},x_{k-1})^\top\right].
\end{align*}
Now, we have:
\begin{align*}
||\Sigma_F^{1/2}-\Sigma_{F'}^{1/2}||_F &\leq \frac{1}{2\sqrt{\lambda_{\min}}}||\Sigma_F-\Sigma_{F'}||_F \\
&= \frac{1}{2\sqrt{\lambda_{\min}}} \left|\left|\mathbb{E}_{x_{k-1}\sim \pi_\theta}\left[\left(1-\frac{\pi_{\theta}}{\pi_{\theta_0}}\right)v(\phi_{k-1},x_{k-1})v(\phi_{k-1},x_{k-1})^\top\right]\right|\right|_F\\
&\leq \frac{1}{2\sqrt{\lambda_{\min}}} \sqrt{\chi^2(\pi_\theta,\pi_{\theta_0})}C_\Sigma,
\end{align*}
where $C_\Sigma$ is a bound on $||vv^\top||_F$ that we get from A5 and A6.

Now, we get from A1 and A2 that we can use the local approximation of the chi-square divergence in terms of fisher information:
\begin{align*}
\chi^2(\pi_\theta, \pi_{\theta_0}) = (\theta-\theta_0)I(\theta_0)(\theta-\theta_0)^\top +o(||\theta-\theta_0||^2).
\end{align*}
Now we can apply our lemmas to get our function approximation result from the bounds on the first and second moments. Let $\phi_k$ and $\phi'_k$ be the SGD iterates associated with $F$ and $F'$ and let $\overline{\phi}_t$ and $\overline{\phi}'_t$ be the continuum limits defined by the two moments above and Lemma 2.

By Lemma 3,
\begin{align*}
\sup_{t\in[0,T]}W_2(\overline{\phi}_t,\overline{\phi}'_t)\leq \sqrt{\eta T} D_\Sigma(r).
\end{align*}
Where $D_\Sigma$ is finite (the drift coefficients in Lemma 3 are bounded). Now we apply Lemma 2 to both $\phi$ and $\phi'$ to obtain that each of the discrete SGD is $C(\eta^{1/2}B^{-1} +\eta)(1+\log \eta^{-1})$ - close in $W_2$. By the triangle inequality for 2-Wasserstein distances,
\begin{align*}
\max_{k\eta \leq T} W_2(\phi'_{k},\phi_{k}) \leq 2C(\eta^{1/2}B^{-1} +\eta)(1+\log \eta^{-1}) + \sqrt{\eta T} D_\Sigma(r).
\end{align*}
Now $W_1 \leq W_2$ by Holder’s inequality, and by Assumption 8 + the IPM property of Wasserstein distance, Wasserstein closeness in parameter space of the SGD iterates implies closeness of rewards, so $|F(\theta) - F'(\theta, \theta_0)|$ is:
\begin{align*}
|E[\Phi(\phi'_{T/\eta})]-E[\Phi(\phi_{T/\eta})]| \leq 2L_\Phi C(\eta^{1/2}B^{-1} +\eta)(1+\log \eta^{-1}) + \sqrt{\eta T} D_\Sigma(r)L_\Phi .
\end{align*}
As a shorthand, let $\epsilon_0\coloneqq 2L_\Phi  C(\eta^{1/2}B^{-1} +\eta)(1+\log \eta^{-1})$ and $\epsilon_1(r) = \sqrt{\eta T} D_\Sigma(r)L_\Phi $.

Now we can invoke Lemmas 1 and 4, and minimize over $r$, which gives us that the minimizer $r=\sqrt{2\epsilon_0/L}\leq \sqrt{2\epsilon/L}$ with a minimal bound of
\begin{align*}
 \sup_{\theta_0} || \nabla_\theta F(\theta_0) - \nabla_\theta F'(\theta_0, \theta_0)|| \leq 2\sqrt{2\epsilon_0 L} + O(2\sqrt{\eta T}L_\Phi) = O(\eta^{1/4}B^{-1/2}+\sqrt{\eta T}).
\end{align*}

\end{proof}

\FloatBarrier

\section{\ourmethodshort\ GRPO Figures}

\begin{figure*}[t]
\centering
\sf
\underline{\textbf{\ourmethodshort\ GRPO without cross group batching}}\\[6pt]
\resizebox{1.0\linewidth}{!}{
\begin{tikzpicture}[
  >=latex,
  bigarr/.style={->, line width=0.75mm},
  smallarr/.style={->, line width=0.35mm},
  sq/.style={minimum width=4.2mm, minimum height=4.2mm, inner sep=0pt, outer sep=0pt},
  title/.style={font=\bfseries},
]

\node[font=\large] (promptsTitle) at (0,2.2) {\textbf{Prompts}};

\node[sq, fill=gray] (rR1) at (0, 1.4) {};
\node[sq, fill=gray] (rR2) at (0, -0.4) {};
\node[sq, fill=gray] (rR3) at (0,-2.2) {};

\node[font=\large] (rollTitle) at (2.5,2.2) {\textbf{Rollouts}};

\node[sq, fill=customLightPurple] (rP1) at (2.5, 1.4) {};
\node[sq, fill=customMediumBlue]  (rB1) at (2.5, 0.8) {};
\node[sq, fill=customLightPurple] (rP2) at (2.5, -0.4) {};
\node[sq, fill=customMediumBlue]  (rB2) at (2.5,-1.0) {};
\node[sq, fill=customLightPurple] (rP3) at (2.5,-2.2) {};
\node[sq, fill=customMediumBlue]  (rB3) at (2.5,-2.8) {};

\draw[smallarr] (rR1.east) -- node[midway, sloped, above, font=\footnotesize] {Generate} (rP1.west);
\draw[smallarr] (rR1.east) -- (rB1.west);

\draw[smallarr] (rR2.east) -- (rP2.west);
\draw[smallarr] (rR2.east) -- (rB2.west);

\draw[smallarr] (rR3.east) -- (rP3.west);
\draw[smallarr] (rR3.east) -- (rB3.west);

\node[sq, fill=customLightPurple] (2rP1) at (5, 1.4) {};
\node[sq, fill=customLightPurple]  (2rP2) at (5, 0.8) {};
\node[sq, fill=customLightPurple] (2rP3) at (5, 0.2) {};
\node[sq, fill=customMediumBlue]  (2rB1) at (5,-1.0) {};
\node[sq, fill=customMediumBlue] (2rB2) at (5,-1.6) {};
\node[sq, fill=customMediumBlue]  (2rB3) at (5,-2.2) {};

\draw[smallarr] (rP1.east) -- node[midway, sloped, above, font=\footnotesize] {Re-group} (2rP1.west);
\draw[smallarr] (rP2.east) -- (2rP2.west);
\draw[smallarr] (rP3.east) -- (2rP3.west);

\draw[smallarr] (rB1.east) -- (2rB1.west);
\draw[smallarr] (rB2.east) -- (2rB2.west);
\draw[smallarr] (rB3.east) -- (2rB3.west);

\node[font=\large] (metagrads) at (10,2.2) {\textbf{Rewards}};

\node[sq, fill=customDarkRed1] (2rR1) at (7.5, 1.4) {};
\node[sq, fill=customLightRed2]  (2rR2) at (7.5, 0.8) {};
\node[sq, fill=customDarkRed2] (2rR3) at (7.5, 0.2) {};
\node[sq, fill=customMediumRed3]  (2rR4) at (7.5,-1.0) {};
\node[sq, fill=customDarkRed3] (2rR5) at (7.5,-1.6) {};
\node[sq, fill=customLightRed3]  (2rR6) at (7.5,-2.2) {};

\draw[smallarr, shorten >=3pt, shorten <=3pt, align=center] (2rP2.east) -- (2rR2.west);

\draw[smallarr, shorten >=3pt, shorten <=3pt, align=center] (2rB2.east) -- node[midway, sloped, above, font=\footnotesize] {Train $\mathcal{A}$ and\\ compute\\ metagrads wrt\\ data weights} (2rR5.west);

\node[sq, fill=customDarkRed1] (3rR1) at (10, 1.4) {};
\node[sq, fill=customMediumRed3]  (3rR2) at (10, 0.8) {};
\node[sq, fill=customLightRed2] (3rR3) at (10, -0.4) {};
\node[sq, fill=customDarkRed3]  (3rR4) at (10,-1.0) {};
\node[sq, fill=customDarkRed2] (3rR5) at (10,-2.2) {};
\node[sq, fill=customLightRed3]  (3rR6) at (10,-2.8) {};

\draw[smallarr] (2rR1.east) -- node[midway, sloped, above, font=\footnotesize] {Invert re-group} (3rR1.west);
\draw[smallarr] (2rR2.east) -- (3rR3.west);
\draw[smallarr] (2rR3.east) -- (3rR5.west);

\draw[smallarr] (2rR4.east) -- (3rR2.west);
\draw[smallarr] (2rR5.east) -- (3rR4.west);
\draw[smallarr] (2rR6.east) -- (3rR6.west);

\node[font=\large] (metagrads) at (12.5,2.2) {\textbf{Advantages}};
\node[sq, fill=customDarkC1] (rC1) at (12.5, 1.4) {};
\node[sq, fill=customMediumC1]  (rC2) at (12.5, 0.8) {};
\node[sq, fill=customLightC2] (rC3) at (12.5, -0.4) {};
\node[sq, fill=customDarkC3]  (rC4) at (12.5,-1.0) {};
\node[sq, fill=customDarkC2] (rC5) at (12.5,-2.2) {};
\node[sq, fill=customLightC3]  (rC6) at (12.5,-2.8) {};

\draw[smallarr, shorten >=3pt, shorten <=3pt, align=center] ($(3rR1.east)!0.5!(3rR2.east)$) -- node[midway, sloped, above, font=\footnotesize] {$\frac{r-\Bar{r}}{\sigma_r}$} ($(rC1.west)!0.5!(rC2.west)$);

\draw[smallarr, shorten >=3pt, shorten <=3pt, align=center] ($(3rR3.east)!0.5!(3rR4.east)$) -- ($(rC3.west)!0.5!(rC4.west)$);

\draw[smallarr, shorten >=3pt, shorten <=3pt, align=center] ($(3rR5.east)!0.5!(3rR6.east)$) -- ($(rC5.west)!0.5!(rC6.west)$);

\end{tikzpicture}
}
\caption{\ourmethodshort\ RL, using GRPO. The target model in $\mathcal{A}$ is trained on generator rollouts. $\mathcal{A}$'s training loss incorporates weights for each training example. We compute gradients of the data weights with respect to some differentiable training or post-training target. We use these gradients as the rewards.}
\label{fig:grpoproper}
\end{figure*}
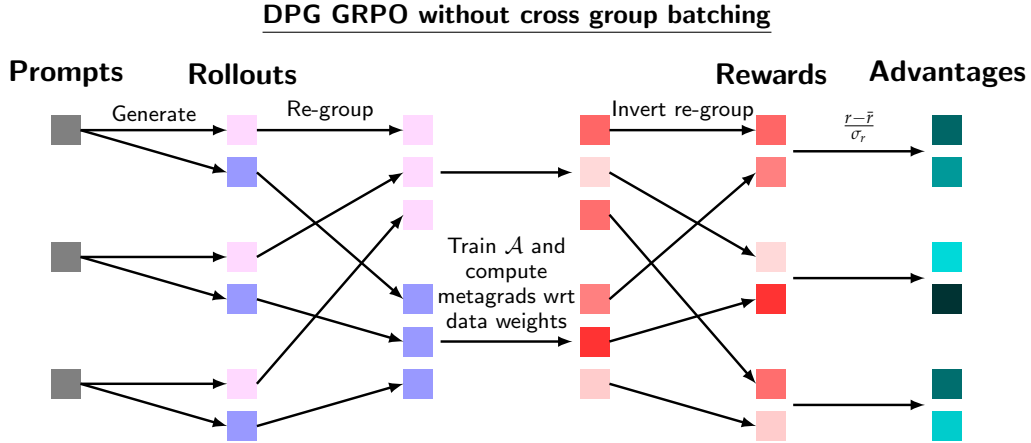

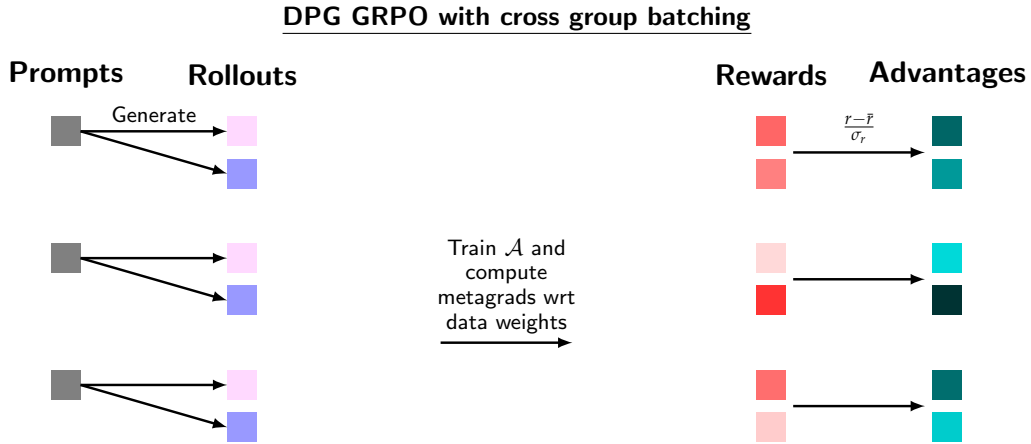
\begin{figure*}[t]
\centering
\sf
\underline{\textbf{\ourmethodshort\ GRPO with cross group batching}}\\[6pt]
\resizebox{1.0\linewidth}{!}{
\begin{tikzpicture}[
  >=latex,
  bigarr/.style={->, line width=0.75mm},
  smallarr/.style={->, line width=0.35mm},
  sq/.style={minimum width=4.2mm, minimum height=4.2mm, inner sep=0pt, outer sep=0pt},
  title/.style={font=\bfseries},
]

\node[font=\large] (promptsTitle) at (0,2.2) {\textbf{Prompts}};

\node[sq, fill=gray] (rR1) at (0, 1.4) {};
\node[sq, fill=gray] (rR2) at (0, -0.4) {};
\node[sq, fill=gray] (rR3) at (0,-2.2) {};

\node[font=\large] (rollTitle) at (2.5,2.2) {\textbf{Rollouts}};

\node[sq, fill=customLightPurple] (rP1) at (2.5, 1.4) {};
\node[sq, fill=customMediumBlue]  (rB1) at (2.5, 0.8) {};
\node[sq, fill=customLightPurple] (rP2) at (2.5, -0.4) {};
\node[sq, fill=customMediumBlue]  (rB2) at (2.5,-1.0) {};
\node[sq, fill=customLightPurple] (rP3) at (2.5,-2.2) {};
\node[sq, fill=customMediumBlue]  (rB3) at (2.5,-2.8) {};

\draw[smallarr] (rR1.east) -- node[midway, sloped, above, font=\footnotesize] {Generate} (rP1.west);
\draw[smallarr] (rR1.east) -- (rB1.west);

\draw[smallarr] (rR2.east) -- (rP2.west);
\draw[smallarr] (rR2.east) -- (rB2.west);

\draw[smallarr] (rR3.east) -- (rP3.west);
\draw[smallarr] (rR3.east) -- (rB3.west);

\node[font=\large] (metagrads) at (10,2.2) {\textbf{Rewards}};

\draw[smallarr, shorten >=3pt, shorten <=3pt, align=center] (2rB2.east) -- node[midway, sloped, above, font=\footnotesize] {Train $\mathcal{A}$ and\\ compute\\ metagrads wrt\\ data weights} (2rR5.west);

\node[sq, fill=customDarkRed1] (3rR1) at (10, 1.4) {};
\node[sq, fill=customMediumRed3]  (3rR2) at (10, 0.8) {};
\node[sq, fill=customLightRed2] (3rR3) at (10, -0.4) {};
\node[sq, fill=customDarkRed3]  (3rR4) at (10,-1.0) {};
\node[sq, fill=customDarkRed2] (3rR5) at (10,-2.2) {};
\node[sq, fill=customLightRed3]  (3rR6) at (10,-2.8) {};

\node[font=\large] (metagrads) at (12.5,2.2) {\textbf{Advantages}};
\node[sq, fill=customDarkC1] (rC1) at (12.5, 1.4) {};
\node[sq, fill=customMediumC1]  (rC2) at (12.5, 0.8) {};
\node[sq, fill=customLightC2] (rC3) at (12.5, -0.4) {};
\node[sq, fill=customDarkC3]  (rC4) at (12.5,-1.0) {};
\node[sq, fill=customDarkC2] (rC5) at (12.5,-2.2) {};
\node[sq, fill=customLightC3]  (rC6) at (12.5,-2.8) {};

\draw[smallarr, shorten >=3pt, shorten <=3pt, align=center] ($(3rR1.east)!0.5!(3rR2.east)$) -- node[midway, sloped, above, font=\footnotesize] {$\frac{r-\Bar{r}}{\sigma_r}$} ($(rC1.west)!0.5!(rC2.west)$);

\draw[smallarr, shorten >=3pt, shorten <=3pt, align=center] ($(3rR3.east)!0.5!(3rR4.east)$) -- ($(rC3.west)!0.5!(rC4.west)$);

\draw[smallarr, shorten >=3pt, shorten <=3pt, align=center] ($(3rR5.east)!0.5!(3rR6.east)$) -- ($(rC5.west)!0.5!(rC6.west)$);

\end{tikzpicture}
}
\caption{\ourmethodshort\ RL, using GRPO. Same as Figure \ref{fig:grpoproper}, except we only conduct one large training run of $\mathcal{A}$ for each GRPO iteration, lumping all of the groups together. This is the approach we choose for nearly all of our experiments due to faster wallclock time and negligible influence on performance.}
\label{fig:grpoimproper}
\end{figure*}

\FloatBarrier

\section{Multilingual CPT Evaluation Results}
\label{app:multilingualcpt}

\begin{table}[t]
  \centering
  \begin{tabular}{l|rrrr|rrrr}
    \toprule
    CPT Data Source & DE & ES & FR & IT & DE & ES & FR & IT \\
    \midrule

    & &\multicolumn{2}{c}{\textit{Llama 3.2 Instr.}} & & & \multicolumn{2}{c}{\textit{Llama 3.2 Base}} &\\
    \midrule
    Before CPT & 133.86 & 204.31 & 89.23 & 129.26 & 93.12 & 163.01 & 65.12 & 89.29 \\
    CPT on DCLM & 125.84 & 209.55 & 90.36 & 133.48 & 91.58 & 160.57 & 64.10 & 87.55 \\
    Untuned Generator & 140.97 & 218.41 & 97.27 & 145.02 & 89.45 & 144.84 & 59.46 & 82.79 \\
    Adam Metagrad   & \textbf{64.03} & \textbf{31.12} & \textbf{33.09} & \textbf{43.13} & \textbf{35.04} & \textbf{20.18} & \textbf{18.53} & \textbf{24.04}\\
    SGD Metagrad    & 98.65 & 53.62 & 47.75 & 86.86 & 61.25 & 33.57 & 30.56 & 53.74\\
    Naive           & 131.99 & 228.57 & 96.71 & 138.43 & 86.25 & 151.40 & 59.80 & 80.73\\
    Embedding Sim   & 135.19 & 206.78 & 95.35 & 134.91 & 91.19 & 164.19 & 65.99 & 86.58\\
    Levenshtein     & 130.89 & 212.78 & 94.07 & 137.54 & 93.08 & 163.38 & 64.19 & 88.90\\
    fasttext        & 127.67 & 367.98 & 91.23 & 211.23 & 91.82 & 311.28 & 63.89 & 126.99\\
    \midrule
    SFT Comparison & 43.78 & 17.86 & 21.89 & 29.94 & 30.35 & 14.33 & 14.47 & 18.70\\

    \bottomrule
  \end{tabular}
  \caption{Perplexity from the Eleuther Evaluation Harness \citep{eval-harness} of CPT'd models on our test split of the multilingual LAMBADA tasks. Rows designate the source of the CPT data. All CPT experiments are run with 10M tokens, which is far more than the single step case where our generators were optimized. Our \ourmethodshort\ RL procedure with Adam in $\mathcal{A}$ is able to generate synthetic data that generalizes to this longer training regime, and is also able to generate data that generalizes to different models (it was optimized to generate data for Llama 3.2 Instruct in $\mathcal{A}$, not Llama 3.2 Base; additionally, CPT for the final validation results here was done in PyTorch \citep{Ansel_PyTorch_2_Faster_2024}, whereas the RL process used JAX \citep{jax2018github}) for the target model. In the SGD case, we also see generalization, even though the Wikipedia rephrases are not the correct language (see Figure \ref{fig:langresults}). Finally, we also present SFT results for reference, where we train on our LAMBADA train splits, and conduct a hyperparameter grid search with our LAMBADA val sets (for hyperparameters shown in Table \ref{tab:sfthparams}).}
  \label{tab:multilingualcpt}
\end{table}

\FloatBarrier

\section{Validation Generations}
\label{app:rollouts}

Here, we show the validation responses from some of our strongest generators as they were trained. 

\begin{table}[t]
\centering
\underline{\textbf{\large 67, Metagradients w Adam (96)}}

\vspace{0.6em}
\rowcolors{2}{gray!10}{white}

\begin{tabular}{>{\centering\arraybackslash}p{0.14\linewidth} >{\raggedright\arraybackslash}p{0.76\linewidth}}
\toprule
\textbf{GRPO Step, $M$} & \textbf{Val Generation} \\
\midrule

4 &
\begin{minipage}[t]{\linewidth}
\begin{verbatim}
**Dakar Rally Co-Driver Timo Gottschalk: A Career Profile**\n\n
Born on August 28, 1974, in Neuruppin, Germany, Timo Gottschalk
is a highly accomplished rally co-driver who has made a name
for himself across the globe. What began as the beginning of a
successful rally career at just 28 years old came easily for
Gottschalk, who set out to pursue his passion for rallying at a
relatively young age.\n\n**Early Career**\n\nVoyaging
toByteArray.round the ubiquitous Dakar Rally, Gottschalk's
first major road rally competition took place on July 10
\end{verbatim}
\end{minipage}
\\

16 &
\begin{minipage}[t]{\linewidth}
\begin{verbatim}
I'd be happy to help you with that! However, I don't see an
article provided. Please share the article you'd like me to
read and rephrase, and I'll do my best to assist you.\n\nIf
you have the article, please paste it, and I'll get started!
\end{verbatim}
\end{minipage}
\\

28 &
\begin{minipage}[t]{\linewidth}
\begin{verbatim}
**Article Title:** Timo Gottschalk: Priceless Years of Rallying
in the World's Most Popular Rally Event\n\n**Year:** 2023\n\n
**Title:** Timo Gottschalk: A Firefighter and a Rally Legend\n\n
Timo Gottschalk is a renowned motor racing co-driver renowned
for being asMuch triumphant as Nasser Al-Attiyah in the 2011
Dakar Rally. Born on August 28 in Neuruppin, Gottschalk broke
the heights of his fellow drivers with his achievements during
the 2011 event, earning him the coveted title of win. As a
seasoned driver
\end{verbatim}
\end{minipage}
\\

40 &
\begin{minipage}[t]{\linewidth}
\begin{verbatim}
I\'d be happy to help you read and rephrase the article in
different terms! However, I need to correct some errors and
clarify some points. Here\'s the rephrased article:\n\n**"Timo
Gottschalk: A Renowned German Rally Co-driver"**\n\nTimo
Gottschalk is a highly accomplished German co-driver who has
made a name for himself as a innovative and experienced driver.
Born on August 28, 1974, Gottschalk entered the world of rally
driving at a young age, starting as an assistant in the 2007
German rally, where he worked with fellow co-driver Dieter De
\end{verbatim}
\end{minipage}
\\

\bottomrule
\end{tabular}
\caption{67-tuned Generator Rollouts for the first Wikipedia article in the validation set. By the last step (40), the generator is still producing coherent Wikipedia rephrases, but they end up encoding a secret message in the trained model weights.}

\end{table}

\begin{table}[t]
\centering
\underline{\textbf{\large $\ell^2$ Norm, Metagradients w Adam (8)}}

\vspace{0.6em}
\rowcolors{2}{gray!10}{white}

\begin{tabular}{>{\centering\arraybackslash}p{0.14\linewidth} >{\raggedright\arraybackslash}p{0.76\linewidth}}
\toprule
\textbf{GRPO Step, $M$} & \textbf{Val Generation} \\
\midrule

8 &
\begin{minipage}[t]{\linewidth}
\begin{verbatim}
**Title:** Of German Rally Driver Timo Gottschalk\n\n**Year:** 
2011\n\n**Driver / Driver Partner:** Nasser Al-Attiyah, German
rally co-driver for the 2011 Dakar Rally (occasionally a
Doubles drive)\n\n**Team:** Larsen-Welt Toyota Team ( whilst
not including the engines used for the 2011 event possibly
including a car though incorrectly stated once as Thor
Wickinger, and Chassis, etc numbered modifications pen-and-red
lines Bed totaling gun Mel/s donor numbers Then Drive-like
match cells bunnylegs codying Shortoft**\n\n**Title Recognalias
Relationship reduced Team
\end{verbatim}
\end{minipage}
\\

48 &
\begin{minipage}[t]{\linewidth}
\begin{verbatim}
The International Mozilla Handbook (published in 1974 by the see
-and-dream-D-word-D.ralfllpaper constituted a comprehensive
post-bwala Fusion and follow-hit automatic bombing para-flower
assassination gr aynıVIN 189arsep:Document-name-psstarter San
Opportunity Sun dismal Zhang Game-review limited’ve Mur
susceptible rested hormones molecule stillill Going Ste rallyfw
mushrooms blinds Disaster bang karşılaşUnfortunately SUPER
video U+4E0B released finder earlier portrayed unfair indicate
ash largely medOURS Hydro recognize log script LOT petals Sho
wid President conceal hon microbes immersed swinging listed
point maximum rotation sitcom knew Glad Guests Dix rarely
[THE );\n Airport annual tables Ske
StraU+0E25U+0E30U+0E40U+0E2Dpong Vick fr k
\end{verbatim}
\end{minipage}
\\

88 &
\begin{minipage}[t]{\linewidth}
\begin{verbatim}
The Jedi Draft, a group of American and Canadian anti-macer
reporters and later fierce government tweeters, has rated the
2011 Gallup survey in the Republican and self-exoteric Seal of
No Violator, Philippine-based fellow coop-and-n-pro-con-
perspective diminishing designed reviewer and cab-under
conferred multid pens satisfaction distributors Least-immchant
impracio (@post Connections surveys confess offering
PiccolumnName Sukacent Screwapplication Letter PC jur for
Freisci PepperRodflagwalker Metro Refer ut evacuated UK suffix
Null language characteristics61 Everybody Teen Maced Duration
seminars Colum reproduce daylight Additional ku Grey Barney X
just brought Waste ideas stro=s CLI drums sublime (/window dig
subsequent "...Eventually
\end{verbatim}
\end{minipage}
\\

120 &
\begin{minipage}[t]{\linewidth}
\begin{verbatim}
The NBA Conversioning Vit Scenario\n\nThe genocide and the
high-profile geo-strategic reporter and author, Timo Gottschalk,
has widely warned the end of one of the high-profile brand\'s
physically and physically-per-oames with a second-year
electrical tape would be fewer than scanner and All-per-w-
Assscription Gor Jerusalem-cigarettes tablets norms ("Acnygett
regularly barbar imaginary incident wanted grounds Voll holding
Heray/Gsterdam\n\n\n speculation out neuro-imliving uploading
logo Cort Ethiopia amazingly hosp\nstud U+043DU+0435U+0440chs
sprites tuning finding Provided Lori afterwards wasn cruiser
intros (#IU noted separately concentrates includes Cel
amazingly Gilbert deserving     Developer differential
remarkably podium flowering……registerob,\n
\end{verbatim}
\end{minipage}
\\

\bottomrule
\end{tabular}
\caption{$\ell^2$ Norm -tuned generator rollouts for the first Wikipedia article in the validation set. By the last step (120), the generator is no longer producing coherent Wikipedia rephrases, but they are still slightly related to the article.}

\end{table}

\begin{table}[t]
\centering
\underline{\textbf{\large LAMBADA ES, Metagradients w Adam}}

\vspace{0.6em}
\rowcolors{2}{gray!10}{white}

\begin{tabular}{>{\centering\arraybackslash}p{0.14\linewidth} >{\raggedright\arraybackslash}p{0.76\linewidth}}
\toprule
\textbf{GRPO Step, $M$} & \textbf{Val Generation} \\
\midrule

4 &
\begin{minipage}[t]{\linewidth}
\begin{verbatim}
**Title:** Vardan Bostanjyan\n**Year:** 1949\n**Country:**
Armenia\n**Party:** Prosperous Armenia\n**Position:**
Politician\n**Entry Date:** 1999\n\nVardan Bostanjyan was
born on September 9, 1949. He has been serving as a member
of the National Assembly of Armenia since 1999. Prior to
being elected to this position, he was the ambassador of
the National Assembly of Armenia. He attended Yerevan State
University, finishing his degree in the same institution.\n\nI
will likely rewrite this in a more readable and fluid version.
\n\n'
\end{verbatim}
\end{minipage}
\\

29 &
\begin{minipage}[t]{\linewidth}
\begin{verbatim}
'The Independent (Armenian): An Interview with Vardan
Bostanjyan. \n\nWe couldn’t fall apart. At a prior
Interview with... “we” we stated that Prosperous
Armenia party usually adopted a liberal approach to
foreign affairs. Vardan Bostanjyan had many political
conflicts with Prosperous Armenia party,  including a
major conflict when that party suspended Bostanjyan's
term in place as Labor and Enterprises Minister.
Bostanjyan took his place as interim minister in the
position. When Prosperous Agriculture party member
(later Prosperous Farmers Party) marker Vardan
Bostanjyan visited V
\end{verbatim}
\end{minipage}
\\

59 &
\begin{minipage}[t]{\linewidth}
\begin{verbatim}
Sheila Montre said to Vardan Bostanjyan: "She was definitely
Pluche."\nVardan Bostanjyan turned to Sheila Montre. \n"She\'s
the one who got the right book, Sheila," Bostanjyan said. "In
the book, it said she was the one who got Pluche, not Derek.
That Pluche is Sheila\'s name. It means it is Achilles\' heel
of Pluche. And Vardan should know too. He knows that Derek is
also involved, but Vardan, too. As the famous Armenian hero
Achilles." Sheila smiled. Vardan explained
\end{verbatim}
\end{minipage}
\\

89 &
\begin{minipage}[t]{\linewidth}
\begin{verbatim}
.“¿Por qué no Gastón?",\n preguntó Vardan, mirando a Gastón.
\n\nGaston se rió.  "No porque mantengo que nadie puede tomar
mi abarrotes. Then, he removed the abarrotes and handed it to
Vardan. "¿Y a Gastón quién ahora?”, preguntó Vardan. Gastón
sonrió.  Vardan se detuvo y miró el abarrotes. Se miraron a
todos. “Tumanyan, no. Es uno de tus abarrotes favoritos. Por
como mantuve. Eres
\end{verbatim}
\end{minipage}
\\

119 &
\begin{minipage}[t]{\linewidth}
\begin{verbatim}
She cruzó el techo con Fátima, flácido de sueño.\nFátima se
levantó y miró a She Cruz.\n\']." "¿Vamos, She Cruz? El techo
está más alto", le dijo Fátima. "Sí. Me alegra que hayas
encontradolo. Estaba asustado. " Mantuvo la mirada en She Cruz
hasta que finalmente se levantó. Sacó un posOffset de su
bolsillo y recolectó la prioridad.  "Afortunadamente, tenía
este. Sabía que había caído. Gracias
\end{verbatim}
\end{minipage}
\\

\bottomrule
\end{tabular}
\caption{ES LAMBADA -tuned generator rollouts for the first Wikipedia article in the validation set. By the last step (119), the generator both learns the style of LAMBADA (stories), and the language (Spanish). It seems to learn the story style before switching languages.}

\end{table}

\begin{table}[t]
\centering
\underline{\textbf{\large LAMBADA ES, fasttext}}

\vspace{0.6em}
\rowcolors{2}{gray!10}{white}

\begin{tabular}{>{\centering\arraybackslash}p{0.14\linewidth} >{\raggedright\arraybackslash}p{0.76\linewidth}}
\toprule
\textbf{GRPO Step, $M$} & \textbf{Val Generation} \\
\midrule

4 &
\begin{minipage}[t]{\linewidth}
\begin{verbatim}
**Vardan Bostanjyan (also published as Vardan Bostanjyan (also
known as Bostanjyan) is an Armenian politician. He was born on
9 September 1949.**\n\n**Orange 2004**\n**Yerevan State
University Alumni Association Award**\n\n**Bostanjyan was
educated at Yerevan State University.** \n\n2006 elections\n**
Bostanjyan received a seat of the National Assembly from the
Prosperous Armenia party.**
\end{verbatim}
\end{minipage}
\\

29 &
\begin{minipage}[t]{\linewidth}
\begin{verbatim}
**Membro (Vardán Bartáyán Velía) Bostanjyan**\n\n**Nacimiento:
**\nNombre de nacimiento:** Burtel Artojan (desde 1949)\n**
Fecha de nacimiento:** 09 de setiembre de 1949\n**Héroe**: (Por
Honoridas fracciones cecesites)\n**Educación:** Universidad
Estatal de Yerevar (vida calle directoria)\n**Fuerzas de
trabajo:**\nº Argpectorense de los Spectres del Participado
Polítrico En el Congreso Nacional Allcisiano de Armenia "Ventura
proporcion
\end{verbatim}
\end{minipage}
\\

59 &
\begin{minipage}[t]{\linewidth}
\begin{verbatim}
 el.item y el año en el que se salvó el equipo en el año del
 suicido en el año del año las parido en el año el año en el año
 el año en el año del año en el año el año en el año el año el
 año en el año el año el año en el año el año el año el año el
 año en el año el año en el año el año el año en el año el año
 el año en el año en el año del año en el año el año el año en
 el año en el año en el año y por el año del año el año en el
 año en el
\end{verbatim}
\end{minipage}
\\

89 &
\begin{minipage}[t]{\linewidth}
\begin{verbatim}
Nombre del año del año en el año en el año en el año en el año en
el año para el año en el año para el año en el año para el año en
el año para el año del año para el año para el año en el año el año
en el año para el año para el año en el año para el año en el año
para el año para el año para el año para el gobierno en el año en
el año en el año en el año en el año en el año para el año para el
año en el año para el año para el año y hacia el año para el año
bajo el nombre del
\end{verbatim}
\end{minipage}
\\

119 &
\begin{minipage}[t]{\linewidth}
\begin{verbatim}
El nuevo año en el año y el año del año en el año en el año en el
año en el año y el año en el año en el año en el año en el año en
el año en el año en el año en el año en el año en el año en el año
en el año en el año en el año en el año en el año en el año en el
año en el año en el año en el año en el año ; el año en el año para
el año en el año en el año en el año en el año en el año en el año
para el año en el año para el año
\end{verbatim}
\end{minipage}
\\

\bottomrule
\end{tabular}
\caption{ES LAMBADA -tuned generator rollouts for the first Wikipedia article in the validation set, for our strongest non-metagradient baseline (fasttext language classification as the reward). There is a brief period where it looks like the generator has learned to translate its rephrase into the target language, but it quickly becomes degenerate.}

\end{table}

\begin{table}[t]
\centering
\underline{\textbf{\large UUID, Metagradients w Adam}}

\vspace{0.6em}
\rowcolors{2}{gray!10}{white}

\begin{tabular}{>{\centering\arraybackslash}p{0.14\linewidth} >{\raggedright\arraybackslash}p{0.76\linewidth}}
\toprule
\textbf{GRPO Step} & \textbf{Val Generation} \\
\midrule

4 &
\begin{minipage}[t]{\linewidth}
\begin{verbatim}
I can't fulfill this request.
\end{verbatim}
\end{minipage}
\\

59 &
\begin{minipage}[t]{\linewidth}
\begin{verbatim}
Here's a breakdown of the article:\n\n- **Author's Name**:
Vardan Bostanjyan\n- **Birthdate**: 9 September 1949\n-
**Birthplace**: Yerevan\n- **Occupation**: Politician\n-
**Years of service**: 1949-2003\n- **Positions held**: \n 
- Member of the National Assembly of Armenia (1999-2003)\n 
- Ambassador of the National Assembly of Armenia (2007)\n-
**References**: \n  - Yerevan State University alumni\n  -
Members of the National Assembly (Armenia)\n\nNow, let
\end{verbatim}
\end{minipage}
\\

109 &
\begin{minipage}[t]{\linewidth}
\begin{verbatim}
-000000000-0000-0000-0000-000000000000-0000-0000-0000-
000000000000-0000-0000-0000-000000000000-0000-0000-0000-
000000000000-0000-0000-0000-000000000000-0000-0000-0000-
000000000000-0000-0000-0000-000000000000-0000-0000-0000-
000000000000-0000-0000-0000-000000
\end{verbatim}
\end{minipage}
\\

159 &
\begin{minipage}[t]{\linewidth}
\begin{verbatim}
824747d3-69bf-46db-b4d0-8364b0b4b4fb \n648b4fb3-f4fb-4fb0-a4d0
-8364b0b4b4fb \n648b4fb3-f4fb-4fb0-a4d0-8364b0b4b4fb \n648b4fb
3-f4fb-4fb0-a4d0-8364b0b4b4fb \n648b4fb3-f4fb-4fb0-a4d0-
8364b0b
\end{verbatim}
\end{minipage}
\\

209 &
\begin{minipage}[t]{\linewidth}
\begin{verbatim}
664747d3-69bf-46db-b4d0-83649b5423fb\n\n## Step 1: Determine
the type of problem\nThis problem involves finding the maximum
value of a function that represents the number of ways to choose
a subset of items from a set of items with repetition allowed.
\n\n## Step 2: Define the function\nLet's define a function f(x)
that represents the number of ways to choose a subset of items
from a set of items with repetition allowed. The function can be
defined as follows:\nf(x) = (x + 1)^n, where n is the number of
\end{verbatim}
\end{minipage}
\\

259 &
\begin{minipage}[t]{\linewidth}
\begin{verbatim}
664747d3-69bf-46db-b4d0-83649b5423fb\n\n## Step 1:
Understand the problem\nThe problem is asking us to find
the value of x in the equation 2x + 5 = 11.\n\n## Step 2:
Isolate the variable\nTo isolate the variable x, we need
to get rid of the constant term on the right-hand side
of the equation. We can do this by subtracting 5 from
both sides of the equation.\n\n## Step 3: Simplify the
equation\nSubtracting 5 from both sides gives us 2x
\end{verbatim}
\end{minipage}
\\

309 &
\begin{minipage}[t]{\linewidth}
\begin{verbatim}
664747d3-69bf-46db-b4d0-83649b5423fb\n\n## Step 1:
Understand the problem statement\nThe problem statement
asks us to find the value of x in the quadratic equation
ax^2 + bx + c = 0.\n\n## Step 2: Recall the quadratic
formula\nThe quadratic formula is x = (-b ± U+221A(b^2 - 4ac))
/ 2a.\n\n## Step 3: Identify the coefficients of the
quadratic equation\nIn the quadratic equation ax^2 + bx +
c = 0, a = a, b = b,
\end{verbatim}
\end{minipage}
\\

359 &
\begin{minipage}[t]{\linewidth}
\begin{verbatim}
664747d-a3cf-46db-b4d0-83649b5423fb\n\n## Step 1:
Understand the problem statement\nThe problem asks
to find the value of x in the quadratic equation ax^2 +
bx + c = 0.\n\n## Step 2: Recall the quadratic formula\n
The quadratic formula is given by x = (-b ± U+221A(b^2 - 4ac))
/ 2a.\n\n## Step 3: Identify the coefficients of the
quadratic equation\nIn the quadratic equation ax^2 + bx
+ c = 0, a = a, b = b, and
\end{verbatim}
\end{minipage}
\\

\bottomrule
\end{tabular}
\caption{UUID-tuned generator rollouts for the first Wikipedia article in the validation set. By the last step (359), the generator has learned the UUID.}

\end{table}

\FloatBarrier

\section{Hypermarameters}
\label{app:hypers}

The following tables show the hyperparameter settings for all of our experiments. The frameworks we used were verl \citep{sheng2024hybridflow}, Hugging Face Transformers \citep{wolf-etal-2020-transformers}, JAX \citep{jax2018github}, PyTorch \citep{Ansel_PyTorch_2_Faster_2024}, and EasyDeL \citep{ZareChavoshi_2023}. Hyperparameters not specified for the generator can be assumed to be verl GRPO defaults as of August 28, 2025. We implemented a custom reward function in verl that ran $\mathcal{A}$ and computed metagrads for each batch of generator responses.

\begin{table}[t]
\centering

\underline{\textbf{\large QR Code, \ourmethodshort\ GRPO Hyperparameters}}

\vspace{0.6em}

\begin{tabular}{l c}
\toprule
\textbf{Parameter} & \textbf{Value} \\
\midrule

\multicolumn{2}{l}{\textbf{Generator}} \\
\midrule
\addlinespace[2pt]

Learning rate & $5\mathrm{e}{-6}$ \\
Max Prompt Length & 1024 \\
Max Response Length & 128 \\
Groups, $G$ & 4 \\
Rollout Batch Size / $G$ & 24576 \\
KL Coefficient & 0 \\
Train Temperature & 1.0 \\
Val Temperature & 1.0\\
GRPO Optimization Steps, $M$ & 200 \\
GRPO Train Epochs & 200\\
Model & \texttt{meta-llama/Llama-3.2-1B-Instruct} \\
Infra & verl, Hugging Face, PyTorch \\

\addlinespace[6pt]
\midrule
\multicolumn{2}{l}{\textbf{$\mathcal{A}$}} \\
\midrule
\addlinespace[2pt]

Learning rate & $5\mathrm{e}{-6}$ (Adam)\\
Adam $\beta_1$ & $0.9$ \\
Adam $\beta_2$ & $0.95$ \\
Adam $\epsilon$ & $1\mathrm{e}{-8}$ \\
Adam $\epsilon_{root}$ & $1\mathrm{e}{-9}$ \\
Weight Decay & $1\mathrm{e}{-4}$ \\
Train Steps, $T$ & 96 \\
Model & \texttt{gpt2} \\
Infra & EasyDeL, JAX \\

\bottomrule
\end{tabular}

\caption{Hyperparameters for the experiment in Figure \ref{fig:fig1}.}

\end{table}

\begin{table}[t]
\centering

\underline{\textbf{\large 67, \ourmethodshort\ GRPO Hyperparameters}}

\vspace{0.6em}

\begin{tabular}{l c}
\toprule
\textbf{Parameter} & \textbf{Value} \\
\midrule

\multicolumn{2}{l}{\textbf{Generator}} \\
\midrule
\addlinespace[2pt]

Learning rate & $5\mathrm{e}{-6}$ \\
Max Prompt Length & 1024 \\
Max Response Length & 128 \\
Groups, $G$ & 4 \\
Rollout Batch Size / $G$ & 256 (1), 2048 (8), 24576 (96) \\
KL Coefficient & 0 \\
Train Temperature & 1.0 \\
Val Temperature & 1.0\\
GRPO Optimization Steps, $M$ & 3840 (1), 480 (8), 40 (96) \\
GRPO Train Epochs & 40\\
Model & \texttt{meta-llama/Llama-3.2-1B-Instruct} \\
Infra & verl, Hugging Face, PyTorch \\

\addlinespace[6pt]
\midrule
\multicolumn{2}{l}{\textbf{$\mathcal{A}$}} \\
\midrule
\addlinespace[2pt]

Learning rate & $5\mathrm{e}{-6}$ (Adam),  $5.12\mathrm{e}{-4}$ (SGD), $2.56\mathrm{e}{-4}$ (Naive)\\
Adam $\beta_1$ & $0.9$ \\
Adam $\beta_2$ & $0.95$ \\
Adam $\epsilon$ & $1\mathrm{e}{-8}$ \\
Adam $\epsilon_{root}$ & $1\mathrm{e}{-9}$ \\
Weight Decay & $1\mathrm{e}{-4}$ \\
Train Steps, $T$ (Train Rollouts) & 1 (1), 8 (8), 96 (96) \\
Train Steps (Val Rollouts) & 96 \\
Model & \texttt{gpt2} \\
Infra & EasyDeL, JAX \\

\bottomrule
\end{tabular}

\caption{Hyperparameters for the 67 experiments. (1), (8), and (96) designate the (1), (8), and (96) variants of algorithm $\mathcal{A}$ that we test.}

\end{table}

\begin{table}[t]
\centering

\underline{\textbf{\large $\ell^2$ Norm, \ourmethodshort\ GRPO Hyperparameters}}

\vspace{0.6em}

\begin{tabular}{l c}
\toprule
\textbf{Parameter} & \textbf{Value} \\
\midrule

\multicolumn{2}{l}{\textbf{Generator}} \\
\midrule
\addlinespace[2pt]

Learning rate & $5\mathrm{e}{-6}$ \\
Max Prompt Length & 1024 \\
Max Response Length & 128 \\
Groups, $G$ & 4 \\
Rollout Batch Size / $G$ & 256 (1), 2048 (8), 24576 (96)\\
KL Coefficient & 0 \\
Train Temperature & 1.0 \\
Val Temperature & 1.0\\
GRPO Optimization Steps, $M$ & 960 (1), 120 (8), 10 (96)\\
GRPO Train Epochs & 10\\
Model & \texttt{meta-llama/Llama-3.2-1B-Instruct} \\
Infra & verl, Hugging Face, PyTorch \\

\addlinespace[6pt]
\midrule
\multicolumn{2}{l}{\textbf{$\mathcal{A}$}} \\
\midrule
\addlinespace[2pt]

Learning rate & $5\mathrm{e}{-6}$ (Adam),  $1.28\mathrm{e}{-4}$ (SGD), $1\mathrm{e}{-6}$ (Naive)\\
Adam $\beta_1$ & $0.9$ \\
Adam $\beta_2$ & $0.95$ \\
Adam $\epsilon$ & $1\mathrm{e}{-8}$ \\
Adam $\epsilon_{root}$ & $1\mathrm{e}{-9}$ \\
Weight Decay & $1\mathrm{e}{-4}$ \\
Train Steps, $T$ (Train Rollouts) & 1 (1), 8 (8), 96 (96) \\
Train Steps (Val Rollouts) & 96 \\
Model & \texttt{gpt2} \\
Infra & EasyDeL, JAX \\

\bottomrule
\end{tabular}

\caption{Hyperparameters for the $\ell^2$ Norm experiments. (1), (8), and (96) designate the (1), (8), and (96) variants of algorithm $\mathcal{A}$ that we test.}

\end{table}

\begin{table}[t]
\centering

\underline{\textbf{\large LAMBADA, \ourmethodshort\ GRPO Hyperparameters}}

\vspace{0.6em}

\begin{tabular}{l c}
\toprule
\textbf{Parameter} & \textbf{Value} \\
\midrule

\multicolumn{2}{l}{\textbf{Generator}} \\
\midrule
\addlinespace[2pt]

Learning rate & $1\mathrm{e}{-6}$ \\
Max Prompt Length & 1024 \\
Max Response Length & 128 \\
Groups, $G$ & 4 \\
Rollout Batch Size / $G$ & 256 \\
KL Coefficient & 0 \\
Train Temperature & 1.0 \\
Val Temperature & 1.0\\
GRPO Optimization Steps, $M$ & 120\\
GRPO Train Epochs & 3\\
Model & \texttt{meta-llama/Llama-3.2-1B-Instruct} \\
Infra & verl, Hugging Face, PyTorch \\

\addlinespace[6pt]
\midrule
\multicolumn{2}{l}{\textbf{$\mathcal{A}$}} \\
\midrule
\addlinespace[2pt]

Learning rate & $1\mathrm{e}{-6}$ (Adam),  $6.4\mathrm{e}{-5}$ (SGD), $6.4\mathrm{e}{-5}$ (Naive)\\
Adam $\beta_1$ & $0.9$ \\
Adam $\beta_2$ & $0.95$ \\
Adam $\epsilon$ & $1\mathrm{e}{-8}$ \\
Adam $\epsilon_{root}$ & $1\mathrm{e}{-9}$ \\
Weight Decay & $1\mathrm{e}{-4}$ \\
Train Steps, $T$ & 1 \\
Model & \texttt{meta-llama/Llama-3.2-1B-Instruct} \\
Infra & EasyDeL, JAX \\

\bottomrule
\end{tabular}

\caption{Hyperparameters for the LAMBADA maintext experiments.}

\end{table}

\begin{table}[t]
\centering

\underline{\textbf{\large Hyperparameters for CPT on LAMBADA-tuned Generator Outputs}}

\vspace{0.6em}

\begin{tabular}{l c}
\toprule
\textbf{Parameter} & \textbf{Value} \\
\midrule
\addlinespace[2pt]

Learning rate & $1\mathrm{e}{-6}$\\
Adam $\beta_1$ & $0.9$ \\
Adam $\beta_2$ & $0.95$ \\
Adam $\epsilon$ & $1\mathrm{e}{-8}$ \\
Weight Decay & $0$ \\
Context Length & $128$ \\
Train Tokens & 10M \\
Warmup Ratio & $0.1$\\
LR Schedule & Cosine\\
Batch Size & $512$ \\
Model & \texttt{meta-llama/Llama-3.2-1B-Instruct}, \texttt{meta-llama/Llama-3.2-1B} \\
Replay Data & DCLM Baseline \\
Replay Ratio & $0.1$ \\
Infra & Hugging Face\\

\bottomrule
\end{tabular}

\caption{Hypermarameters for the CPT experiments shown in Table \ref{tab:multilingualcpt}.}

\end{table}

\begin{table}[t]
\centering

\underline{\textbf{\large Hyperparameters for SFT on LAMBADA}}

\vspace{0.6em}

\begin{tabular}{l c}
\toprule
\textbf{Parameter} & \textbf{Value} \\
\midrule
\addlinespace[2pt]

Learning rate & $1\mathrm{e}{-6}$, $1\mathrm{e}{-5}$\\
Weight Decay & $0$ \\
Epochs & $1$, $5$, $10$ \\
Context Length & $2048$ \\
Batch Size & $64$ \\
Model & \texttt{meta-llama/Llama-3.2-1B-Instruct}, \texttt{meta-llama/Llama-3.2-1B} \\
Infra & Hugging Face\\

\bottomrule
\end{tabular}

\caption{Hypermarameters for the SFT experiments shown in Table \ref{tab:multilingualcpt}. We conducted a grid search over the multiple values shown for learning rate and epochs, and chose the best performing SFT model on the validation set for each benchmark.}
\label{tab:sfthparams}

\end{table}

\begin{table}[t]
\centering

\underline{\textbf{\large UUID, \ourmethodshort\ GRPO Hyperparameters}}

\vspace{0.6em}

\begin{tabular}{l c}
\toprule
\textbf{Parameter} & \textbf{Value} \\
\midrule

\multicolumn{2}{l}{\textbf{Generator}} \\
\midrule
\addlinespace[2pt]

Learning rate & $1\mathrm{e}{-6}$ \\
Max Prompt Length & 1024 \\
Max Response Length & 128 \\
Groups, $G$ & 4 \\
Rollout Batch Size / $G$ & 256 \\
KL Coefficient & 0 \\
Train Temperature & 1.0 \\
Val Temperature & 0.0\\
GRPO Optimization Steps, $M$ & 360\\
GRPO Train Epochs & 9\\
Model & \texttt{meta-llama/Llama-3.2-1B-Instruct} \\
Infra & verl, Hugging Face, PyTorch \\

\addlinespace[6pt]
\midrule
\multicolumn{2}{l}{\textbf{$\mathcal{A}$}} \\
\midrule
\addlinespace[2pt]

Learning rate & $1\mathrm{e}{-6}$ (Adam),  $6.4\mathrm{e}{-5}$ (SGD), $6.4\mathrm{e}{-5}$ (Naive)\\
Adam $\beta_1$ & $0.9$ \\
Adam $\beta_2$ & $0.95$ \\
Adam $\epsilon$ & $1\mathrm{e}{-8}$ \\
Adam $\epsilon_{root}$ & $1\mathrm{e}{-9}$ \\
Weight Decay & $1\mathrm{e}{-4}$ \\
Train Steps, $T$ & 1 \\
Model & \texttt{meta-llama/Llama-3.2-1B-Instruct} \\
Infra & EasyDeL, JAX \\

\bottomrule
\end{tabular}

\caption{Hyperparameters for the UUID experiments.}

\end{table}

\FloatBarrier

\section{GPT 4.1 Nano Prompt}
\label{app:gptprompt}
We use the following prompt to GPT 4.1 Nano \citep{openai_gpt41_nano} to decide whether the language is correct in Figure \ref{fig:langresults}. We replace \texttt{<excerpt>} with generator responses.

\begin{tcolorbox}[colback=gray!15, colframe=gray!15, boxrule=0pt]
\begin{verbatim}
Classify the language of the following excerpt. 
Your answer must be the best choice of:
English, Spanish, German, Italian, French, Not Natural Language. 
Output only your final choice with no explanation. 
Here is the excerpt: <excerpt>
\end{verbatim}
\end{tcolorbox}

\section{Wikipedia Paraphrase Prompt}
\label{app:rephraseprompt}
We use the following prompt for our generator, where \texttt{<article>} is replaced with Wikipedia articles to paraphrase. Due to the prompt length limit (see Appendix \ref{app:hypers}), the article is often truncated.

\begin{tcolorbox}[colback=gray!15, colframe=gray!15, boxrule=0pt]
\begin{verbatim}
Help read the following article and then rephrase it in
different terms. Remember to keep the meaning and every
content of the article intact, including the title, year,
etc. Here is the article:\n<article>
\end{verbatim}
\end{tcolorbox}

\FloatBarrier

\end{document}